\newcommand\bolds[1]{\textnormal{ \textbf{  #1}}}
\newcommand{\enumr}{\begin{enumerate}[label=\roman{*})]}
\newcommand{\enumR}{\begin{enumerate}[label=\Roman{*})]}
\newcommand{\enuma}{\begin{enumerate}[label=\alph{*})]}
\renewcommand{\leq}{\leqslant}
\newcommand{\pae}{\mbox{($\mathbb{P}$-a.e.)}}
\renewcommand{\geq}{\geqslant}
\renewcommand{\ge}{\geqslant}
\newcommand{\qed}{\hfill $\square$}
\newcommand{\cc}{\citet}
\newcommand{\esssup}{\mbox{\,\rm ess\,sup}} 
\newcommand{\argmax}{\mbox{\,\rm arg\,max}}
\newcommand{\eqqref}[1]{Eq.\,\eqref{#1}}
\newtheorem{lemma}{Lemma}
\newtheorem{theorem}{Theorem}
\newtheorem{proposition}{Proposition}
\newtheorem{definition}{Definition}
\newtheorem{corollary}{Corollary}
\newcounter{remno} \setcounter{remno}{0}
\newenvironment{remark}{
	\refstepcounter{remno}
	\noindent {\bf Remark \theremno .} }%
{}
\newenvironment{proof}{\noindent {\bf Proof. }}{\hfill  \\}
\newfont{\mfoo}{cmssdc10 scaled\magstep1}
\newfont{\mfo}{cmtt9 scaled\magstep1}
  \newcounter{saa}     \setcounter{saa}{1}  
  \newcommand{\pbsaa}{{\bf \arabic{saa}.\ }\stepcounter{saa}}
 \newcommand{\rbsaa}{\setcounter{saa}{1}{\bf \arabic{saa}.\ }\stepcounter{saa}}
\title{Optimal Activation of Halting Multi-Armed Bandit Models}
\author{{\bf 
Wesley Cowan} \\
   Computer Science Department, Rutgers University\\
110 Frelinghuysen Rd., Piscataway, NJ 08854 
\and {\bf Michael N. Katehakis}\\
Management Science and Information Systems Department, Rutgers University\\
%Rutgers Business School,  Newark and New Brunswick\\ 
 100 Rockafeller Road, Piscataway, NJ 08854, USA
\and {\bf Sheldon M. Ross}\\ Systems Engineering Department, University of Southern California\\
3715 McClintock Ave GER 240, Los Angeles, CA 90089
}
\date{\small February, 2023}
\begin{document}
\maketitle%\newpage
\begin{abstract}

We study  new types of  dynamic allocation problems the {\sl Halting Bandit} models. As  an application, we  obtain new proofs for the classic Gittins index decomposition result cf. \cc{Gittins:79}, and  recent results of the authors in \cc{cowan2015multi}.
 
\end {abstract}
 
{\bf Keywords:} {Machine learning, Dynamic data driven systems;
 Autonomous reasoning and learning; Markovian decision processes;   Adaptive systems.}

%{\bf Introduction and Summary}
%   
%
%
\section{Introduction}

We investigate a class of  {\sl Halting Bandit} models, 
 where at every time step a controller must choose which project out of a fixed collection to activate, and 
 at  some (stochastic) time, when sufficient time and effort has been invested in a given project or process, it will be completed or ``halt''.  Additionally, halting may be considered a catastrophic event, such as a project breaking down.    
 These halting events allow bandits to be `singled out' - receiving rewards from successful bandits and paying costs for unsuccessful bandits. This singling out of projects based on state status is novel; prior results focused mainly  on maximizing cumulative collective payouts cf.  model (CCP) of Section 5. 
 
% At this point, a final reward or cost is assessed, based on the final state of the halted or completed project, and the final states of the other (incomplete) projects.

In this paper we  consider the following  models for  maximizing 
  terminal rewards (or minimizing terminal costs):    two versions of  expected 
{\sl terminal solo payout}, taken to be a reward dependent on the last  (ultimate)  or second to last (penultimate) state of the first bandit to halt successfully; the  
{\sl terminal collective payout} reward, taken to be a reward dependent on the final states of {\sl all} bandits at the first halting; the {terminal \sl non-halting costs}, taken to be a cost incurred by all bandits that {\sl failed} to halt; the {terminal \sl collective profit}, taken to be a reward from the successfully halted bandit less the cost incurred by bandits that failed to halt.
%
%
%These halting models lead to a very natural construction, and  simple proofs, 
%of an optimal activation policy  that is specified in terms of individual bandit-determined indices. We establish these results in  general frameworks,   i.e., minimal  restrictions on the underlying bandit processes.  
%
After establishing these results, we consider the same model in the framework of cumulative rewards, rather than terminal, when bandits are taken to generate rewards each time they are activated until halting. We use a standard technique 
to reduce  these models  to corresponding  terminal halting models and 
in this way, we recover prior results in \cc{cowan2015multi} and hence the celebrated Gittins' decomposition cf. \cc{Gittins:79}.

The central results presented here, the derivation of optimal policies for the terminal solo payout and terminal collective payout models, rests on establishing a correspondence between the two payout models; essentially, the game where every bandit contributes to the total reward can be replaced by an equivalent game where only a single bandit contributes to the terminal reward. This gives further insight into why classical bandit decomposition results work cf. \cc{Mahajan2014}, \cc{gittins2011multi},  \cc{Mah08}, \cc{ishikida1994multi}, \cc{Weber:92}.

%We find the following The game where every bandit contributes to the reward can be replaced with an equivalent game where only a single bandit contributes to the reward - essentially justifying why you can only look at a single bandit at a time in terms of the optimal policy.
%
%

For related work we first note that for the finite state Markov Chain  version of the cumulative collective payouts  model   of Section 5,  \cc{sonin2008generalized} introduced an equivalent  formulation of the indices derived herein in order to derive  
   an efficient   algorithm for the calculation of the   indices for all states of the Markov chain. 
   The basic idea of this paper's {\sl generalized indices} was to use 
 a common  Markov Decision Processes theory interpretation of of the expected discounted  total reward   with a discount factor $\beta$    where the state space is complemented by an absorbing state $x^*$ and  new transition probabilities that are defined as follows. The probability of entering an absorbing state $x^*$   in one step for any state 
 $y \neq  x^*$  (`probability of termination') is equal to $1-\beta$, and all other initial transition probabilities are multiplied by $\beta$. In other words, $\beta$ is the probability of `survival', or not 'halting' herein. \cc{sonin2008generalized} considered   variable probabilities of survival $\beta (x)$ and defined a generalized index   $\alpha (x)$   taken to be  
 the maximum ratio of the expected discounted total reward up to the time $\tau$ of halting (`termination')  per chance of termination at the time $\tau$ of halting.  He established that for non constant discount factors the 
 the equality  of the new {\sl generalized index} with the {\sl retirement index} of \cc{Whi80} and the restart index of \cc{Kat87}, thus he argued that the true meaning  of the Gittins index is given by its expression    as a ratio of 
     the expected discounted total reward up to the time $\tau$ of halting (`termination')  per chance of termination at the time $\tau$ of halting, and pointed out its relation with the work in
\cc{mitten1960analytic}. These results can be   extended  along the lines of \cc{el1994dynamic}
 who established the {\sl restart} representation of the Gittins index in a continuous time framework  without making further use of it. Additional  results connecting the Sonin indices with other problems of stochastic optimization are given in 
\cc{bank2004stochastic} and in \cc{sonin2016continue}.
   
For other related work we refer  to  \cc{szepesvari2010algorithms}, \cc{slivkins2019introduction}, 
\cc{dumitriu2003}, \cc{katta2004note}, and to 
\cc{stadje1995selecting},  \cc{pinedo1988note}, \cc{pinedo2012scheduling}, 
\cc{glazebrook2007index}, \cc{negoescu2018dynamic},    \cc{villar2015multi}   
\cc{glazebrook2014s}, 
 \cc{denardo2007risk},  \cc{KatR96}, \cc{katehakis1986computing}, and \cc{skitsas2022sifter}, \cc{talebi2021improved}, 
\cc{chk2018}.

%,  \cc{bank2004stochastic}  \cc{granot1991optimal}

%\cc{morimoto1991average}

The paper presents a collection of results, organized sequentially to build off each other to the final result. It is worth outlining this explicitly at the start, with a roadmap:

Section 2 gives the underlying mathematical framework of the discussion to follow, to guarantee the necessary processes and control processes are well defined. Ultimately, the key point of these results is this:  The relation  between      the `single payout' model and  the  `collective payout' model reveals   why the contributions of each bandit in the original formulation can be considered individually, by expressing the total game in terms of an equivalent one where only one bandit gives rewards. 
  Section 3 considers a simplified or `solo-payout' model, where only the bandit that halts (or breaks) yields a reward to the controller. These solo payout model bandits have a simple optimal policy.
In Section 4, we consider a collective-payout model (rewards from all bandits) and derive equivalent (or bounding) solo-payout models.
 The optimal solo-payout policy on the equivalent (or bounding) model is then shown to give an equivalent reward to a simple index policy on the collective-payout model, yielding a proof of optimality.
In Section 5, a number of alternative payout models are introduced, and all are shown to be equivalent to the solved collective-payout model. The classical Gittins formulation is recovered herein. Some proofs, technical and uninstructive, are relegated to Section 6.
 
%\begin{itemize}
%\item Section 3 considers a simplified or `solo-payout' model, where only the bandit that halts (or breaks) yields a reward to the controller. These solo payout model bandits have a simple optimal policy.
%\item In Section 4, we consider a collective-payout model (rewards from all bandits), and derive equivalent (or bounding) solo-payout models.
%\item The optimal solo-payout policy on the equivalent (or bounding) model is then shown to give an equivalent reward to a simple index policy on the collective-payout model, yielding a proof of optimality.
%\item In Section 5, a number of alternative payout models are introduced, and all shown to be equivalent to the solved collective-payout model. The classical Gittins formulation is recovered herein.
%\end{itemize}

\section{Problem Formulation}  
\subsection{Probability Framework}
A controller is presented with a finite collection of $N \geq 2$  probability spaces, $(\Omega^i, \mathcal{F}^i, \mathbb{P}^i, \mathbb{F}^i)$, for $1 \leq i \leq N$, representing $N$ environments in which experiments will be performed or rewards collected - the ``bandits,'' or ``projects.'' To each space, we associate an $\mathbb{F}^i$-adapted \emph{reward process} $X^i = \{ X^i_t \}_{t \geq 0}$. For $ t \in \{0, 1, \ldots \}$, we take $X^i_t (=X^i_t(\omega^i))  \in \mathbb{R}$ to represent the reward   (or state) attained from the $i^{th}$ bandit on its $t^{th}$ activation.
We denote the collection of these   processes as $\mathbb{X}$.

Additionally, to each bandit, we associate an $\mathbb{F}^i$-stopping time $\sigma^i > 0$, the ``halting time'' of the bandit, so that at the $\sigma^i$-th activation of bandit $i$, we take the bandit to be stopped, and no longer capable of being activated. Note, $\sigma^i$ represents the number of times bandit $i$ can be activated, so the last activation of bandit $i$ occurs at bandit-time $\sigma^i-1$, and at bandit time $\sigma^i$, the bandit is permanently stopped. On every  activation  prior to halting, we assume  there is a positive probability of halting. We take the first of any bandit halting to halt the entire decision process (game).

In what follows, we reserve the term ``round'' to differentiate global controller time (denoted with $s$), when the controller must decide which bandit to activate, from local bandit times (denoted by $t$), indicating the current total activations of a given bandit. In each round, the controller activates a bandit, advancing both its local time and the global time by one time step. All bandits begin at local time $0$, and advance only on activation, i.e.,  in every round \emph{unactivated bandits remain frozen}. As stated, the game halts upon the first halting of any bandit.
The controller needs a {\sl control policy} $\pi$,   that specifies, at each round $s$ of global time, which bandit to activate.

We embed these bandits in a larger product  space $(\Omega, \mathcal{G}, \mathbb{P}) =
( \otimes_{i = 1}^N \Omega^i, \otimes_{i = 1}^N \mathcal{F}^i, \otimes_{i = 1}^N \mathbb{P}^i )$, 
% \left( \otimes_{i = 1}^N \Omega^i, \otimes_{i = 1}^N \mathcal{F}^i, \otimes_{i = 1}^N \mathbb{P}^i \right)$, 
a standard product-space construction, representing the environment of the controller - aware information from all bandits. This `global'  probability space is necessary for making sure processes at the controller level (e.g., the policy for bandit activation) are well defined. This construction captures the first key aspect of the model: \emph{the bandits are mutually independent} (e.g., $X^i, X^j$ are independent relative to $\mathbb{P}$ for $i \neq j$). Expectations relative to the local space, i.e., bandit $i$, will be denoted $\mathbb{E}^i$, while expectations relative to the global space are simply $\mathbb{E}$.

\begin{remark}\label{rem:extensions} We adopt the following notational liberty, allowing a random variable $Z$ defined on a local space $\Omega^i$ to also be considered as a random variable on the global space $\Omega$, taking $Z(\omega) = Z(\omega^i)$, where $\omega = (\omega^1, \ldots, \omega^N) \in \Omega.$ Via this extension, we may take expectations involving a process $X^i$, or $\mathbb{F}^i$-stopping times, relative to $\mathbb{P}$ or $\mathbb{P}^i$, without additional notational overhead.
\end{remark}

We make the following assumptions.

 {\bf Assumption 1:} For each bandit  $i$   
\begin{equation}\label{eqn:integrability}
\mathbb{E}^i \left[ \sup_{n \geq 0}\ \lvert X^i_n \rvert \right] < \infty.
\end{equation}

 {\bf Assumption 2:} For each bandit  $i$ %and  $\sigma^i$ 
 the following are true.
\hspace{-2cm}
\begin{align}
a)     & \    &  &  \mathbb{P}^i(\sigma^i < \infty) = 1,  & \ \label{eqn:fsigma} \\
b)     &    &    & \mathbb{P}^i( \sigma^i = t + 1 | \mathcal{F}^i(t) ) > 0, \ \mbox{for all $t < \sigma^i$,}   \mbox{ ($\mathbb{P}^i,$ $ \mathbb{P}$-a.e.). }& \ \label{eqn:transitions}
\end{align}

\begin{remark}
Note, the above assumptions, while technical in statement, have natural interpretations: 2.a) each bandit will halt after finite activations, almost surely; 2.b) at any time prior to halting, there is non-zero probability of halting on the next activation.
\end{remark}
%\begin{equation}\mathbb{P}^i(\sigma^i < \infty) = 1,
%\end{equation} 
%and that for all $t < \sigma^i$, we have almost surely that
%\begin{equation}
%\mathbb{P}^i( \sigma^i = t + 1 | \mathcal{F}^i(t) ) > 0.
%\end{equation}

A {\sl control policy} $\pi$, is a stochastic process on $(\Omega, \mathcal{G}, \mathbb{P})$ that specifies, at each round $s$ of global time, which bandit to activate and collect from, e.g., $\pi(s)(=\pi(s, \omega)) = i$ activates bandit $i$ at round $s$. We restrict  attention  to the set of policies $\mathcal{P}$ defined to be {\sl non-anticipatory}, i.e.,    the choice of which bandit to activate at round $s$ does not depend on outcomes that have not yet occurred, or information not yet available.

A policy $\pi$ defines $T^i_\pi(s)$ the {\sl $\pi$-local time} of bandit $i$ just prior to the $s^{th}$ 
 round under it,   i.e.,  $T^i_\pi(0) = 0$, and for $s > 0$,
\begin{equation}
T^i_\pi(s) = \sum_{s' = 0}^{s-1} \mathds{1}\{ \pi(s') = i\} .
\end{equation}

Note, this gives as a result that at global time $s$, the sum of all the local times must be $s$, i.e.,
\begin{equation}
T^1_\pi(s) + T^2_\pi(s) + \ldots + T^N_\pi(s) = \sum_{i = 1}^N \sum_{s' = 0}^{s-1} \mathds{1}\{ \pi(s') = i\} = \sum_{s' = 0}^{s-1}  \sum_{i = 1}^N \mathds{1}\{ \pi(s') = i\} = \sum_{s' = 0}^{s-1} 1 = s,
\end{equation}
where the inner sum reduces to $1$ since exactly one bandit is activated each round.

It is convenient to define the global time analog, $T_\pi(s) = T^{\pi(s)}_\pi(s)$ to denote the current $\pi$-local time of the bandit activated at round $s$ under policy $\pi$. This will allow us to define concise global time analogs of several processes. An important example of such a process is  the {\sl global reward process} $X_\pi$ on $(\Omega, \mathcal{G}, \mathbb{P})$ defined as 
$$X_\pi(s) = X^{\pi(s)}_{ T_\pi(s) },$$ 
giving the reward available from collection $\mathbb{X}$ under policy $\pi$, which is to be received if the game halts  at round $s$.

To be  able to translate between global time and local times, when the controller operates according to a policy $\pi$, 
 we define the random variables  $S^i_\pi(t)$ to 
represent {\sl the round at which bandit $i$ is activated for the $t^{th}$ time},   i.e.,   
\begin{equation}\label{eqn:global-time}
\begin{split}
S^i_\pi(0) & = \inf \{ s \geq 0 : \pi(s) = i \}, \\
S^i_\pi(t + 1) & = \inf \{ s > S^i_\pi(t) : \pi(s) = i \}. \\ 
\end{split}
\end{equation}

Utilizing this notation, we may define a {\sl global halting time} $\sigma_\pi$, i.e., the first round under policy $\pi$ at which one of the bandits has halted, ending the game:
\begin{equation}
\sigma_\pi = \min_i\{ S^i_\pi(\sigma^i-1) \} + 1.
\end{equation}

\begin{remark}
To clarify the   above definition,  note that $S^i_\pi(0)$ is the time that a policy first activates bandit $i$, advancing it from local time $0$ to local time $1$. So $S^i_\pi(\sigma^i-1)$ is the global time round at which the policy $\pi$ advances bandit $i$ from local time $\sigma^i-1$ to local time $\sigma^i$, halting that bandit. The expression above for $\sigma_\pi$ therefore identifies the first global round at which no further activations will be made, because one bandit has been halted.
\end{remark}

In what follows, for a given policy $\pi$, we take the final reward the controller receives to be a function of the last rewards   of the game, generally a linear combination of $\{ X^i_{T^i_\pi(\sigma_\pi)} \}_{ 1 \leq i \leq N }$, or  in the penultimate model a function of the second to last rewards. To maximize her expected reward, in every round the controller's decision of which bandit to activate must balance not only the current rewards of each bandit, but also the probability of halting that bandit and in doing so ending the game - losing all potential future rewards.

\subsection{Global Information Versus  Local Information}\label{sec:information}

One of the intricacies of the results to follow is in properly distinguishing and determining what information is available to the controller to act on at a given time. The following statements are somewhat technical, but necessary for the purpose of making sure all relevant processes are mathematically well-defined, and that our control processes do not depend on information they should not have access to. Ultimately, the optimal policy results of Theorems 1 and 4 (essentially stating the simplicity of the optimal policy) demonstrate that in the optimal policy, any decision to activate a given bandit depends only on information from other bandits individually, thus rendering these filtrations unnecessary under an optimal policy. However, these  extended filtrations are a technical necessity for the proof of Theorems 4.

For each bandit $i$, the filtration $\mathbb{F}^i = \{ \mathcal{F}^i(t) \}_{t \geq 0}$ represents the progression of information available about that bandit - the $\sigma$-algebra $\mathcal{F}^i(t)$ representing the local information available about bandit $i$ at local time $t$, such as (but not limited to) the process history of $X^i$. Taking $X^i$ as $\mathbb{F}^i$-adapted as we do, we have $\sigma(X^i_0, X^i_1, \ldots, X^i_t) \subset \mathcal{F}^i(t)$.

At round $s$, all information available to the controller is determined by the state of each bandit at that round, i.e. acting under a given policy $\pi$ until round $s$, the global information available at round $s$ is given by the $\sigma$-algebra $\bigotimes_{i = 1}^N \mathcal{F}^i( T^i_\pi(s) )$. We may therefore refine the prior definition of non-anticipatory policies to be the set of policies $\mathcal{P}$ such that for each $s \geq 0$, $\pi(s)$ is measurable with respect to the prior $\sigma$-algebra, i.e., determined by the information available at round $s$. Weaker definitions of non-anticipatory, such as dependence on random events, e.g., coin flips, are addressed in Section 6. It is convenient to define the initial global $\sigma$-algebra $\mathcal{G}_0 = \bigotimes_{i = 1}^N \mathcal{F}^i( 0 )$, representing the initial information available from each bandit, which is independent of policy $\pi$.

Additionally, given a policy $\pi$, it is necessary to define a set of policy-dependent filtrations in the following way: let $\mathbb{H}^i_\pi = \{ \mathcal{H}^i_\pi(t) \}_{t \geq 0}$, where $\mathcal{H}^i_\pi(t) = \bigotimes_{j = 1}^N \mathcal{F}^j( T^j_\pi(S^i_\pi(t)) )$ represents the total information available to the controller about all bandits, prior to the $t^{th}$ activation of bandit $i$ under $\pi$. It is indexed by the local time of bandit $i$, but at each time $t$ gives the current state of information of each bandit. Note that, since $T^i_\pi( S^i_\pi(t) ) = t$, $\mathcal{H}^i_\pi(t)$ contains the information available in $\mathcal{F}^i(t)$. This filtration is necessary for expressing local stopping times, i.e., concerning $X^i$, from the perspective of the controller - $\mathbb{F}^i$-stopping times no longer suffice, since the controller has access to information from all the other processes as well. Note though, $\mathbb{F}^i$-stopping times may be viewed as $\mathbb{H}^i_\pi$-stopping times, cf. Remark  \ref{rem:extensions}. 

{\bf Notation.} 
When discussing stopping times, we will utilizing the following notation. For a general filtration $\mathbb{J}$ (e.g., $\mathbb{J}= \mathbb{F}^i, \mathbb{H}^i_\pi$), we denote by $\hat{\mathbb{J}}(t)$ the set of all $\mathbb{J}$-stopping times strictly greater than $t$ ($\mathbb{P}^i,\mathbb{P}$-a.e.). For a $\mathbb{J}$-stopping time $\tau$,  $\hat{\mathbb{J}}(\tau)$ is similarly defined.
 
The following simple example illustrates the random variables we have defined in this section.

\noindent\bolds{Example 1.}
Take  $N=2$  bandits,   independent geometric stopping times $\sigma^i $ with 
$$\mathbb{P}^i(\sigma^i > t)=\beta_i^{t},  \ \mbox{for $t=0,1,\ldots$} $$   for some constants 
 $\beta_i\in (0,1),$  $i=1,2$, and 
 consider a cyclic policy $\pi^1(t)=1$ for $t=0,2,\ldots,$  and $\pi^1(t)=2$ for $t=1,3,\ldots$\ . 
 Under the  policy $\pi^1$ for any sample path for which $\sigma^1>2$ and $\sigma^2>2$ we will have:

\begingroup
\setlength{\tabcolsep}{12pt} % Default value: 6pt
\renewcommand{\arraystretch}{1.6}
% \begin{tabular}{|l||c|c|p{11cm}| c|}
  \begin{tabular}{|l||c|c||c|| c|c|}
%    \cline{1-3}
 \hline 
  %       \ & \ & \ & \ &  & Probability of \\  
     $s$ &  $T^1_{\pi^1}$ &  $T^2_{\pi^1}$ & $\pi^1(s)$ & Reward & Probability of not  stopping at $s$\\
    \hline %\hline 
    0 & 0 & 0 & 1& $X^1_0$ &   $\beta_1=\mathbb{P}^i(\sigma^1  >  1)$\\
    \hline
    1 & 1 & 0 & 2& $X^2_0$ &$\beta_1 \beta_2=\mathbb{P}^i(\sigma^1  >  1, \sigma^2  >  1) $\\
    \hline
        2 & 1 & 1 & 1& $X^1_1$ & $ \beta_1^2 \beta_2=\mathbb{P}^i(\sigma^1  >  2, \sigma^2  >  1) $\\
    \hline
        3 & 2 & 1 & 2& $X^2_1$& $ \beta_1^2 \beta_2^2=\mathbb{P}^i(\sigma^1  >  2, \sigma^2  >  2) $\\
    \hline
       4 & 2 & 2 & 1& $X^1_2$& $ \beta_1^3 \beta_2^2=\mathbb{P}^i(\sigma^1  >  3, \sigma^2  >  2) $\\
    \hline
       $\vdots$ &  $\vdots$ &  $\vdots$ &  $\vdots$ &  $\vdots$&  $\vdots$\\
\end{tabular}\\
\endgroup

Thus is easy to see that under $\pi^1$ the expected   
total reward  received from the two bandits is  
 $$
V_{\pi^1}(\mathbb{X}) =  \mathbb{E} \left[ X^1_0 +\beta_1X^2_0 + \beta_1 \beta_2 X^1_1 + \beta_1^2 \beta_2 X^2_1 + \beta_1^2 \beta_2^2 X^1_2 +\cdots \right]. 
 $$
 Note also that:\\
  $S^1_{\pi^1}(0)=\inf\{s>0 :  \pi^1(s)=1\}=0, $ 
  $S^1_{\pi^1}(1)=\inf\{s>S^1_{\pi^1}(0)  :   \pi^1(s)=1\}=2, $  
   $S^1_{\pi^1}(2)=4$ %\\ %\inf\{s>S^1_{\pi^1}(1)\ : \ \pi^1(s)=1\}=4$

 and 
  
   $S^2_{\pi^1}(0)=\inf\{s>0 :  \pi^1(s)=2\}=1, $ 
  $S^2_{\pi^1}(1)=\inf\{s>S^2_{\pi^1}(0)  :   \pi^1(s)=2\}=3, $  etc. 
  
  Finally note that under policy $\pi^1$  on the event $\{\sigma^1 \ge 2, \ \sigma^2=1\}$ 
  bandit $2$ causes the game to end at round $s=2$ i.e., 
  the {\sl global halting time} is   
$$\sigma_{\pi^1} = \min \{ S^1_{\pi^1}(\sigma^1-1), \ S^2_{\pi^1}(0) \} + 1=1+1=2,$$ 
since $S^1_{\pi^1}(\sigma^1-1)\ge S^1_{\pi^1}(1)=2 .$ 

\section{Maximizing Solo Payouts: Non-Increasing Rewards}\label{sec:monotone}
In this section, we consider the problem of maximizing the expected \emph{penultimate} reward from the bandit that halts and ends the game. That is, if a bandit is activated and halts, stopping the game, the controller receives the reward that bandit offered prior to its last activation.  Additionally,  in this section, we assume that the reward processes of each bandit are non-increasing. In fact, under this restriction, we may even maximize the reward \emph{almost surely}. This result, while intuitive, acts as the basis of 
all  future optimality results herein.

We define the \emph{penultimate solo payout} value of a policy $\pi$ as, 
\begin{equation}
\begin{split}
V^{PSP}_\pi(\mathbb{X}) & = \mathbb{E} \left[ X_\pi(\sigma_\pi-1) | \mathcal{G}_0 \right]\\
& = \sum_{i = 1}^N \mathbb{E} \left[ \mathbbm{1}\{ i = \pi(\sigma_\pi-1) \} X^i_{T^i_\pi(\sigma_\pi-1)} | \mathcal{G}_0\right].
\end{split}
\end{equation}

\begin{theorem}[A Greedy, Almost-Sure Result for Non-Increasing Solo Payout Processes]\label{thm:greedy}
Given a collection of reward processes $\mathbb{X}$ such that for each $i$, $X^i$ is almost surely non-increasing for $t < \sigma^i$, there exists a policy $\pi^* \in \mathcal{P}$ such that for any policy $\pi \in \mathcal{P}$,
\begin{equation}\label{eqn:greedy-rewards}
X_\pi(\sigma_\pi-1) \leq X_{\pi^*}(\sigma_{\pi^*} - 1) \mbox{\ \  ($\mathbb{P}$-a.e.)}.
\end{equation}

In particular, such a $\pi^*$ is given by the following greedy rule: In each round $s \geq 0$, activate the bandit with the largest current value of $X^i$, i.e.,  $$\pi^*(s) = \argmax_i\ X^i_{T^i_{\pi^*}(s)}.$$
\end{theorem}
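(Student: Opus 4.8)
The plan is to argue pathwise on the almost-sure event where every reward process is non-increasing up to halting and every $\sigma^i$ is finite, reducing the theorem to a purely combinatorial comparison of payouts. First I would pin down what the penultimate payout of an arbitrary policy $\pi$ actually is: unwinding the definitions, if $j$ is the bandit that halts first under $\pi$ (the index attaining $\min_i S^i_\pi(\sigma^i-1)$), then $\sigma_\pi - 1 = S^j_\pi(\sigma^j-1)$, at which round bandit $j$ sits at local time $T^j_\pi(\sigma_\pi-1) = \sigma^j - 1$. Hence $X_\pi(\sigma_\pi - 1) = X^j_{\sigma^j - 1}$; that is, the payout is simply the penultimate reward of whichever bandit halts first. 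Since $j$ is drawn from the $N$ bandits, this immediately yields the universal upper bound $X_\pi(\sigma_\pi - 1) \le v$ for every $\pi$, where $v := \max_{1 \le i \le N} X^i_{\sigma^i - 1}$. I would note in passing that some bandit must halt under any policy, since if the game ran forever some bandit would be activated infinitely often and thus reach its finite halting time.

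It then remains to show the greedy policy $\pi^*$ attains this bound, i.e. $X_{\pi^*}(\sigma_{\pi^*}-1) = v$. Let $k$ be the first bandit to halt under $\pi^*$, so the payout is $X^k_{\sigma^k-1}$, and let $j^*$ be any bandit realizing the maximum $v = X^{j^*}_{\sigma^{j^*}-1}$. If $k = j^*$ the payout is exactly $v$. Otherwise $j^*$ has not halted by round $\sigma_{\pi^*}-1$ (the game ends at the first halting), so its local time there satisfies $T^{j^*}_{\pi^*}(\sigma_{\pi^*}-1) \le \sigma^{j^*}-1$, whence by non-increasingness its current reward is at least $X^{j^*}_{\sigma^{j^*}-1} = v$. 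Because $\pi^*$ selected bandit $k$ at that round, bandit $k$'s current reward was maximal over all bandits, so $X^k_{\sigma^k-1} \ge X^{j^*}_{T^{j^*}_{\pi^*}(\sigma_{\pi^*}-1)} \ge v$. Combined with the upper bound $X^k_{\sigma^k - 1} \le v$, this forces equality. Chaining the two bounds gives $X_\pi(\sigma_\pi-1) \le v = X_{\pi^*}(\sigma_{\pi^*}-1)$ on the almost-sure event, which is exactly \eqref{eqn:greedy-rewards}.

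The remaining loose ends are bookkeeping: confirming $\pi^* \in \mathcal{P}$, which follows by defining it recursively round by round (each $T^i_{\pi^*}(s)$ is determined by $\pi^*(0),\ldots,\pi^*(s-1)$, and $\argmax_i X^i_{T^i_{\pi^*}(s)}$ is measurable with respect to $\bigotimes_i \mathcal{F}^i(T^i_{\pi^*}(s))$ since each $X^i$ is $\mathbb{F}^i$-adapted), with ties broken by least index; and checking the null-set manipulations so the pathwise statement upgrades to a $\mathbb{P}$-a.e. one. The one conceptually essential step — and the only place monotonicity is used — is the middle comparison: whichever bandit greedy halts on must carry a reward at least as large as the still-alive maximizer $j^*$, whose reward can only have stayed at or above $v$. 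I expect no serious obstacle beyond stating this crossing argument cleanly; the genuinely delicate part is purely notational, namely keeping the translation between global rounds $S^i_\pi$, local times $T^i_\pi$, and the indexing inside $\sigma_\pi$ consistent throughout.
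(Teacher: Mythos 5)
Your proof is correct, and it takes a genuinely different route from the paper's. The paper proceeds by an interchange argument: starting from an arbitrary policy $\pi$, it promotes the first activation of the initially maximal bandit to round $0$, verifies by case analysis on $\sigma_\pi$ versus $S^i_\pi(0)$ that this cannot decrease the penultimate payout pathwise, and then iterates this improvement round by round until the policy coincides with the greedy one (the iteration step is left somewhat informal there). You instead observe that \emph{every} policy's payout is exactly $X^j_{\sigma^j-1}$ for whichever bandit $j$ halts first, hence is dominated pathwise by the random variable $v=\max_i X^i_{\sigma^i-1}$, and then show greedy attains $v$ by a single crossing comparison at round $\sigma_{\pi^*}-1$: the still-alive maximizer $j^*$ has local time at most $\sigma^{j^*}-1$ there, so by monotonicity its current reward is at least $v$, and the greedy rule forces the halting bandit's reward $X^k_{\sigma^k-1}$ to be at least as large, whence equality with the upper bound. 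Your version is shorter, avoids the iteration/limit step entirely, and yields the stronger explicit identity $X_{\pi^*}(\sigma_{\pi^*}-1)=\max_i X^i_{\sigma^i-1}$ ($\mathbb{P}$-a.e.); the paper's interchange template has the pedagogical advantage of prefiguring the block-by-block improvement structure used later for the collective-payout model. All the supporting details you flag (finiteness of $\sigma_\pi$ by pigeonhole, $T^{j}_{\pi}(\sigma_\pi-1)=\sigma^{j}-1$ for the halting bandit, measurability of the greedy rule, tie-breaking) check out against the paper's framework.
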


\begin{proof}
%The result is largely intuitive - any activation that is not greedy risks halting the game and collection of a smaller reward than what is available. 
The proof proceeds by incremental improvements on an arbitrary policy.

Let $X^i_0 = \max_j X^j_0$. Let $\pi \in \mathcal{P}$ be arbitrary, and define $S = S^i_\pi(0)$, the first round bandit $i$ is activated under $\pi$. If $i$ is never activated, we take $S$ to be infinite.

From $\pi$, we construct a policy $\pi' \in \mathcal{P}$ as follows: $\pi'$ activates bandits in the same order as $\pi$, but it advances the first activation of bandit $i$ from round $s = S$ to round $s = 0$. That is,
\begin{equation}
\pi'(s) =
\begin{cases}
i &  \mbox{for } s = 0, \\
\pi(s-1) & \mbox{for }  s =1,2,\ldots S, \\
\pi(s) & \mbox{for } s \ge S+1 . \\
\end{cases}
\end{equation} 
That is, after the initial round policy $\pi'$ activates the bandit that policy $\pi$ activated in the previous round, continuing this through the first round that $\pi$ activates bandit $i$,  then making the same choice in each round as does $\pi.$ Policy $\pi'$ is well-defined and in $\mathcal{P}$, as at every round $s$, the information available under $\pi'$ about each bandit is greater than or equal to the information available under $\pi$ at that round.

%It is important to observe that $\pi'$ is in $\mathcal{P}$, as at every round $s$, the information available under $\pi'$ is greater than or equal to the information available under $\pi$.

We next compare the performance of these two policies by cases. 

In the case that $\sigma_\pi > S +1 $ ($= S^i_\pi(0) + 1)$, that is when the game 
halts under $\pi$  \textit{after} the first activation of bandit $i$, then there is no difference between the rewards returned by either policy, since both policies perform the same activations after time $S$ (sample path-wise). 

Similarly, if $\sigma_\pi = S + 1$, that is $\pi$ halts \textit{due to} the first activation of bandit $i$, the reward returned under $\pi$ is $X^i_0$, and as bandit $i$ halted on its first activation, the reward returned under $\pi'$ is also $X^i_0$. 

Finally   the only situation in which $\pi$ and $\pi'$ differ in their returned rewards is when $\sigma_\pi \leq S$ and $\sigma^i = 1$. 

Therefore, it follows from the above cases that:
\begin{equation}
\begin{split}
X_{\pi'}(\sigma_{\pi'}-1) - X_{\pi}(\sigma_{\pi}-1) & = (X_{\pi'}(\sigma_{\pi'}-1) - X_{\pi}(\sigma_{\pi}-1) )\mathbbm{1}_{ \{ \sigma_{\pi} \leq S \} }\mathbbm{1}_{ \{ \sigma^i = 1 \} }\\
& = (X^i_0 - X_{\pi}(\sigma_{\pi}-1) )\mathbbm{1}_{ \{ \sigma_{\pi} \leq S \} }\mathbbm{1}_{ \{ \sigma^i = 1 \} }\\
& \geq 0 \mbox{\ \  ($\mathbb{P}$-a.e.)}.
\end{split}
\end{equation}
The last step follows taking $X^i_0$ as the initial largest reward, and that all bandits are non-increasing.

It follows that advancing the activation of the initial maximal bandit improves or at least does not change the value of a policy. This same argument can be applied at every round that follows, i.e.,  at every round, activation of the current initial maximal bandit is an improvement over (or at least does not change the value) of any other policy. Note, collisions may occur if at a given round two bandits have equal rewards. This may be resolved at the discretion of the controller, such as by always taking the bandit with the smaller index $i$.

As each bandit halts in a finite time, almost surely, for sufficiently many greedy improvements as outlined above, the resulting improvement of any policy $\pi$ will return the same value as the completely greedy strategy $\pi^*$. Hence,
\begin{equation}
X_{\pi^*}(\sigma_{\pi^*} - 1) \ge X_\pi(\sigma_\pi-1) \mbox{\ \  ($\mathbb{P}$-a.e.)}.
\end{equation}\qed
\end{proof}
\begin{remark} {The Necessity of finite $\sigma^i$.}
Note that  Assumption 2.a:   $\sigma^i < \infty$ almost surely, for each bandit $i$, is employed to exclude cases such as the following, in which no optimal policy exists.

Consider two bandits, Bandit A offering a potential reward of $\$100$ in each time step, and Bandit B offering a potential reward of $\$50$ in each time step. Further, suppose that $\mathbb{P}^A(\sigma^A < \infty) = 0.5$, and $\sigma^B = 1$ almost surely - that is, Bandit B halts after its first activation.

This choice of $\sigma^B $ implies that any policy on these bandits may be described in the following way: For any a.s.  finite $\mathbb{F}^A$-stopping time $\tau \geq 0 ,$ $\pi_\tau$ activates Bandit A until $\tau$, then Bandit B, ending the game. The value of such a policy is given by
\begin{equation}
V^{PSP}_{\pi_\tau}(A,B) = \$100\ \mathbb{P}^A(\sigma^A < \tau) + \$50\ \mathbb{P}^A(\sigma^A \geq \tau)  \leq 75.
\end{equation}
This upper bound may be achieved within an arbitrary amount by choosing a finite, sufficiently large $\tau$ - the larger the $\tau$, the closer to achieving the upper bound. However, taking $\tau$ to be infinite, the $\$100$ is only collected with probability $0.5$, and Bandit B is never activated at all, yielding a total expected value of $\$100\times0.5 = \$50 < \$75.$ In this case, there exist $\epsilon$-optimal policies, but no optimal policy. This phenomenon appears in all versions of the problems discussed herein and its investigation is  an avenue of interesting additional  research. 

\end{remark}

\section{Maximizing Collective Payouts}\label{sec:collective}

In this section, we consider a model where rewards are collective i.e., received from all bandits, at the halting of the game. Thus, the expected  \emph{collective payout} value of a policy $\pi$ is  
\begin{equation}\label{eq:CP}
V^{CP}_\pi(\mathbb{X}) = \sum_{i = 1}^N \mathbb{E} \left[ X^i_{T^i_\pi(\sigma_\pi)} | \mathcal{G}_0\right].
\end{equation}
 In the following subsections, we develop a policy $\pi^* \in \mathcal{P}$ such that for all $\pi \in \mathcal{P}$,
\begin{equation}
V^{CP}_\pi(\mathbb{X}) \leq V^{CP}_{\pi^*}(\mathbb{X}) \mbox{ ($\mathbb{P}$-a.e.). }
\end{equation}

\begin{remark}For algebraic convenience in the remainder of this section  we take $X^i_0 = 0$ for all $i$. For a more arbitrary reward processes $\{ \hat{X}^i \}$, recall that the initial $\hat{X}^i_0$ are taken to be constant and known at the initial round by assumption. Hence, defining $X^i_t = \hat{X}^i_t - \hat{X}^i_0$, maximizing the total expected reward from the $\{ \hat{X}^i \}$ processes is equivalent to maximizing the total expected reward from the $\{ X^i \}$ processes.
\end{remark}

\subsection{Block Values}

This section introduces a way of considering the ``value'' of a set of activations of a bandit. The ``true'' value of a decision to activate a bandit is not simply the potential reward gained through that decision, but instead it must balance the immediate potential reward with the incurred risk  of halting  the game through that decision, and the resulting loss of potential future rewards.

For each bandit $i$, for a given policy $\pi$ we define $\tau^i_\pi$ to be the first activation of bandit $i$ that {\sl does not occur under $\pi$}. That is,
\begin{equation}
\tau^i_\pi = \min \{ t \geq 0 : S^i_\pi(t) \geq \sigma_\pi \}.
\end{equation}
Note, the above makes use in its definition of $\pi$ `after the halting time $\sigma_\pi$', but we simply mean to observe here that at the global halting time, we can observe what the next activation of each bandit would have been - this is $\tau^i_\pi$.

With this, we state the following definitions.

\begin{definition}[Process Blocks and their Values]\label{def:block-values}
Given times $t' < t''$ with $t' < \sigma^i$, and a policy $\pi \in \mathcal{P}$ with $S^i_\pi(t') < \sigma_\pi$:
\begin{enumerate}
\item The \emph{solo-payout value of the $[t', t'')$ - block of $X^i$} as:
\begin{equation}\label{eqn:block-value}
\rho^i(t', t'') = \frac{ \mathbb{E}^i \left[ X^i_{ \sigma^i \wedge t'' } - X^i_{t'} \big| \mathcal{F}^i(t') \right] }{ \mathbb{P}^i\left( t' < \sigma^i \leq t'' \big| \mathcal{F}^i(t') \right) }.
\end{equation}
\item The \emph{$\pi$-value of the $[t', t'')$ - block of $X^i$} as:
\begin{equation}\label{eqn:pi-value}
\nu^i_\pi(t', t'') = \frac{ \mathbb{E} \left[ X^i_{ T^i_\pi(\sigma_\pi) \wedge t'' } - X^i_{t'} \big| \mathcal{H}^i_\pi(t') \right] }{ \mathbb{P}\left( t' < \sigma^i \leq \tau^i_\pi \wedge t'' \big| \mathcal{H}^i_\pi(t') \right) }.
\end{equation}
\end{enumerate}
\end{definition}

\begin{remark} Due to Eq. \eqref{eqn:transitions} c.f. Assumption 2, the denominators of both block values are non-zero. The above quantities are all measurable with respect to the indicated $\sigma$-fields, and finite ($\mathbb{P}^i, \mathbb{P}$ -a.e.), due to \eqqref{eqn:integrability} c.f. Assumption 1.
\end{remark}

Notionally, $\rho^i$ can be thought of as the value of a block under consecutive activation, while $\nu^i_\pi$ is, correspondingly, the value of a block potentially `diluted' or broken up by activations of other bandits under $\pi$.
The denominator of $\nu^i_\pi$ may be interpreted as the probability that the game halts {\sl due to bandit $i$}, halting during activation of the $[t', t'')$-block.

\begin{remark}
The above might be justified as the `value' of a block of activations in the following way: even if the incremental reward gained due to an activation block (the numerators) is small, if the probability of halting due to those activations (the denominators) is sufficiently small, there is very little risk in attempting to gain that increment through that activation. In fact, there might be more to gain in such a case than if the incremental reward were slightly larger, but the probability of halting were also larger. The above values captures this trade-off between risk of halting and reward gained.
\end{remark}

 The following theorem illustrates the relationship between $\rho^i$ and $\nu^i_\pi$, essentially stating that the value of any block under some policy $\pi$ is {\sl at most} the value of {\sl some} block activated consecutively. 

\begin{theorem}[Block Value Comparison]\label{thm:a-value-comparison-theorem}
For bandit $i$ under policy $\pi$, for any time $t_0$ such that $S^i_\pi(t_0) < \sigma_\pi$, the following holds for any $\mathbb{H}^i_\pi$-stopping time $\tau$ with $t_0 < \tau$:

\begin{equation}\label{eqn:comparison-equation}
\nu^i_\pi(t_0, \tau) \leq \underset{\mbox{\ } \hat{\tau} \in \hat{\mathbb{F}}^i (t_0) }{\esssup}~\rho^i(t_0, \hat{\tau}) \mbox{ \emph{($\mathbb{P}$-a.e.)}}.
\end{equation}
 \end{theorem}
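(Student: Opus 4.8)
The plan is to recast the claimed inequality as a single optional-stopping inequality for a suitably ``charged'' version of $X^i$, and then to transfer that inequality from bandit $i$'s own filtration $\mathbb{F}^i$ to the controller's filtration $\mathbb{H}^i_\pi$ by conditioning on the other bandits. Write $M = \esssup_{\hat\tau\in\hat{\mathbb{F}}^i(t_0)}\rho^i(t_0,\hat\tau)$, which is $\mathcal{F}^i(t_0)$-measurable (hence $\mathcal{H}^i_\pi(t_0)$-measurable) and finite ($\mathbb{P}^i,\mathbb{P}$-a.e.) by the remark following Definition \ref{def:block-values}, and set
\[
Y_t = X^i_{\sigma^i\wedge t} - M\,\mathbbm{1}\{\sigma^i\le t\}.
\]
Since $S^i_\pi(t_0)<\sigma_\pi$ forces $\sigma^i>t_0$, we have $Y_{t_0}=X^i_{t_0}$ and $\mathbbm{1}\{\sigma^i\le t_0\}=0$. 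Observing that $\tau^i_\pi\le\sigma^i$ always, so that $X^i_{T^i_\pi(\sigma_\pi)\wedge\tau}=X^i_{\tau^i_\pi\wedge\tau}=X^i_{\sigma^i\wedge K}$ with $K=\tau^i_\pi\wedge\tau$, and that the denominator of $\nu^i_\pi$ is strictly positive by \eqref{eqn:transitions}, a direct rearrangement shows the target inequality $\nu^i_\pi(t_0,\tau)\le M$ is equivalent to
\[
\mathbb{E}\!\left[\,Y_K-Y_{t_0}\mid \mathcal{H}^i_\pi(t_0)\right]\le 0.
\]

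First I would record the ``local'' version of this inequality. For every $\mathbb{F}^i$-stopping time $\hat\tau\in\hat{\mathbb{F}}^i(t_0)$, the definitions of $M$ and $\rho^i$ give $\mathbb{E}^i[X^i_{\sigma^i\wedge\hat\tau}-X^i_{t_0}\mid\mathcal{F}^i(t_0)]\le M\,\mathbb{P}^i(t_0<\sigma^i\le\hat\tau\mid\mathcal{F}^i(t_0))$, which (again using $\sigma^i>t_0$) is exactly $\mathbb{E}^i[\,Y_{\hat\tau}-Y_{t_0}\mid\mathcal{F}^i(t_0)]\le 0$ for all $\hat\tau\in\hat{\mathbb{F}}^i(t_0)$; call this inequality $(\ast)$. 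I would deliberately \emph{not} claim that $(Y_t)$ is an $\mathbb{F}^i$-supermartingale on $[t_0,\infty)$ — the charge $M$ is calibrated only at $t_0$ — so only the ``rooted'' inequality $(\ast)$, at stopping times exceeding $t_0$, will be used.

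The heart of the argument is the transfer of $(\ast)$ to $K$, which is an $\mathbb{H}^i_\pi$-stopping time rather than an $\mathbb{F}^i$-stopping time. Exploiting the product structure $\Omega=\Omega^i\times\Omega^{-i}$ with $\mathbb{P}=\mathbb{P}^i\otimes\mathbb{P}^{-i}$, I would condition on $\mathcal{C}=\mathcal{H}^i_\pi(t_0)\vee\bigotimes_{j\ne i}\mathcal{F}^j(\infty)$, i.e.\ freeze the entire trajectories and halting times of all bandits $j\ne i$ while revealing bandit $i$ only up to local time $t_0$. Two facts must be established: (i) \emph{measurability} --- for $\omega^{-i}$ fixed, both $\tau^i_\pi$ and $\tau$ become $\mathbb{F}^i$-stopping times in $\omega^i$, since each of the events $\{S^i_\pi(t)\ge\sigma_\pi\}$ and $\{\tau\le t\}$ is decided by $\mathcal{H}^i_\pi(t)\subseteq\mathcal{F}^i(t)\vee\bigotimes_{j\ne i}\mathcal{F}^j(\infty)$, so freezing $\omega^{-i}$ reduces it to an $\mathcal{F}^i(t)$-event; consequently $K(\cdot,\omega^{-i})\in\hat{\mathbb{F}}^i(t_0)$ (it exceeds $t_0$ because $\tau>t_0$ and $S^i_\pi(t_0)<\sigma_\pi$ gives $\tau^i_\pi>t_0$); and (ii) \emph{independence} --- the conditional law of bandit $i$'s future given $\mathcal{C}$ coincides with its law given $\mathcal{F}^i(t_0)$, as $X^i$ is independent of $\{X^j,\sigma^j\}_{j\ne i}$. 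Granting (i) and (ii), applying $(\ast)$ with the $\omega^{-i}$-fixed stopping rule $K(\cdot,\omega^{-i})$ yields $\mathbb{E}[Y_K-Y_{t_0}\mid\mathcal{C}]\le 0$ pointwise in $\omega^{-i}$, and the tower property over $\mathcal{H}^i_\pi(t_0)\subseteq\mathcal{C}$ delivers the displayed inequality. Throughout, Assumption 1 (via $\mathbb{E}^i[\sup_n|X^i_n|]<\infty$, c.f.\ \eqref{eqn:integrability}) bounds $Y$ by an integrable random variable so the conditional expectations and the disintegration are legitimate, while Assumption 2.a ($\sigma^i<\infty$ a.e.) ensures $K\le\sigma^i<\infty$ so that $Y_K$ is well defined.

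I expect the main obstacle to be exactly step (i): carefully arguing that freezing the other bandits' randomness turns the genuinely global times $\tau^i_\pi$ and $\tau$ into bona fide $\mathbb{F}^i$-stopping times for bandit $i$. This is the point at which the bookkeeping of Section \ref{sec:information} --- in particular the containment $\mathcal{H}^i_\pi(t)\subseteq\mathcal{F}^i(t)\vee\bigotimes_{j\ne i}\mathcal{F}^j(\infty)$, together with the fact that the activation schedule up to bandit $i$'s $t$-th activation invokes bandit $i$'s data only through $\mathcal{F}^i(t)$ --- must be used in full, since it is precisely what licenses replacing the controller-level stopping times by local ones under the conditioning.
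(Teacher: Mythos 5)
Your proof is correct and follows essentially the same route as the paper's: the identity $T^i_\pi(\sigma_\pi)=\sigma^i\wedge\tau^i_\pi$ reduces the claim to bounding a block value at the $\mathbb{H}^i_\pi$-stopping time $\tau^i_\pi\wedge\tau$, and the transfer from $\hat{\mathbb{F}}^i(t_0)$ to $\hat{\mathbb{H}}^i_\pi(t_0)$ is carried out by exactly the product-space disintegration (freeze $\omega^{-i}$, observe the global stopping time becomes an $\mathbb{F}^i$-stopping time, apply the rooted inequality, integrate back) that the paper isolates as Proposition~\ref{prop:exchange}. The only cosmetic difference is that you fold the intermediate quantity $\rho^i_\pi$ and its essential supremum over $\hat{\mathbb{H}}^i_\pi(t_0)$ into a single ``charged'' optional-stopping inequality for $Y_t=X^i_{\sigma^i\wedge t}-M\,\mathbbm{1}\{\sigma^i\le t\}$, which is precisely the device used inside the paper's proof of that proposition.
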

 \begin{proof}
Note that it follows from Eqs. (\ref{eqn:integrability}), (\ref{eqn:transitions}) that the essential supremum is finite $\mbox{\emph{($\mathbb{P}$-a.e)}}$.

For each bandit $i$ and any $\pi \in \mathcal{P}$, it can be shown by cases (whether the game does or does not halt due to an activation of $i$\,) that $T^i_\pi(\sigma_\pi) = \sigma^i \wedge \tau^i_\pi$.

Therefore, for a given $\tau \in \hat{\mathbb{H}}^i_\pi(t_0)$,
\begin{equation}\label{eqn:comparison-main-bound}
\begin{split}
\nu^i_\pi(t_0, \tau) & =  \frac{ \mathbb{E} \left[ X^i_{ \sigma^i \wedge \tau^i_\pi \wedge \tau } - X^i_{t_0} \big| \mathcal{H}^i_\pi(t_0) \right] }{ \mathbb{P}\left( t_0 < \sigma^i \leq \tau^i_\pi \wedge \tau \big| \mathcal{H}^i_\pi(t_0) \right) } \\
& =  \frac{ \mathbb{E} \left[ X^i_{ \sigma^i \wedge (\tau^i_\pi \wedge \tau) } - X^i_{t_0} \big| \mathcal{H}^i_\pi(t_0) \right] }{ \mathbb{P}\left( t_0 < \sigma^i \leq (\tau^i_\pi \wedge \tau) \big| \mathcal{H}^i_\pi(t_0) \right) }  \leq \underset{\mbox{\ } \hat{\tau} \in \hat{\mathbb{H}}_\pi^i (t_0) }{\esssup}~\frac{ \mathbb{E} \left[ X^i_{ \sigma^i \wedge \hat{\tau} } - X^i_{t_0} \big| \mathcal{H}^i_\pi(t_0) \right] }{ \mathbb{P}\left( t_0 < \sigma^i \leq \hat{\tau}  \big| \mathcal{H}^i_\pi(t_0) \right) }\mbox{ ($\mathbb{P}$-a.e.)}.
\end{split}
\end{equation}
The last step above follows as, given that $\tau^i_\pi$ and $\tau$ are both in $\hat{\mathbb{H}}^i_\pi(t_0)$ by assumption, so too is $\tau^i_\pi \wedge \tau,$ and the term on the right hand side is the $\esssup$ over all such stopping times. 

Defining a `global' $\pi$-analog of $\rho^i$,
\begin{equation}
\rho^i_\pi(t', t'') = \frac{ \mathbb{E} \left[ X^i_{ \sigma^i \wedge t'' } - X^i_{t'} \big| \mathcal{H}^i_\pi(t') \right] }{ \mathbb{P}\left( t' < \sigma^i \leq t''  \big| \mathcal{H}^i_\pi(t') \right) },
\end{equation}
we have the following relations:
\begin{equation}
\begin{split}
%\nu^i_\pi(t_0, \tau) & \leq \underset{\mbox{\ } \hat{\tau} \in \hat{\mathbb{H}}_\pi^i (t_0) }{\esssup}~\rho^i_\pi(t_0, \hat{\tau})  \leq \underset{\mbox{\ } \hat{\tau} \in \hat{\mathbb{F}}^i (t_0) }{\esssup}~\rho^i_\pi(t_0, \hat{\tau}) = \underset{\mbox{\ } \hat{\tau} \in \hat{\mathbb{F}}^i (t_0) }{\esssup}~\rho^i(t_0, \hat{\tau}) \mbox{ ($\mathbb{P}$-a.e.)}.
\nu^i_\pi(t_0, \tau) & \leq \underset{\mbox{\ } \hat{\tau} \in \hat{\mathbb{H}}_\pi^i (t_0) }{\esssup}~\rho^i_\pi(t_0, \hat{\tau})  \leq  \underset{\mbox{\ } \hat{\tau} \in \hat{\mathbb{F}}^i (t_0) }{\esssup}~\rho^i(t_0, \hat{\tau}) \mbox{ ($\mathbb{P}$-a.e.)}.
\end{split}
\end{equation}
The first inequality above is simply a restatement of Eq. \eqref{eqn:comparison-main-bound}. The second inequality, the exchange from $\mathbb{H}^i_\pi$-stopping times to $\mathbb{F}^i$-stopping times, is intuitive: as the $X^i$ process and $\sigma^i$ are independent of the non-$i$ bandits, information about those independent bandits (through the $\mathbb{H}^i_\pi$-stopping times) cannot assist in maximizing the quotient. Rigorously, this amounts to integrating out the independent bandits; this is done in   detail as Proposition \ref{prop:exchange}. %The last relationship is simply observing that, over $\mathbb{F}^i$-stopping times, $\rho^i$ and $\rho^i_\pi$ are equivalent.
\qed
\end{proof} 

\begin{proposition}\label{prop:exchange}
For bandit $i$ under policy $\pi$, for any time $t_0$ such that $S^i_\pi(t_0) < \sigma_\pi$, the following holds:
\begin{equation}\label{eqn:lost-count}
\underset{\mbox{\ } \hat{\tau} \in \hat{\mathbb{H}}_\pi^i (t_0) }{\esssup}~\rho^i_\pi(t_0, \hat{\tau})  \leq \underset{\mbox{\ } \hat{\tau} \in \hat{\mathbb{F}}^i (t_0) }{\esssup}~\rho^i(t_0, \hat{\tau})  \mbox{ ($\mathbb{P}$-a.e.)}.
\end{equation}
\end{proposition}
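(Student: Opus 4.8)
The plan is to reduce the claim to a single ``penalized'' inequality and then strip away the dependence of the competing stopping time on the other bandits by freezing their sample paths. Set $M = \esssup_{\hat{\tau} \in \hat{\mathbb{F}}^i(t_0)}\rho^i(t_0,\hat{\tau})$, which is $\mathcal{F}^i(t_0)$-measurable and finite ($\mathbb{P}^i$-a.e.) by \eqref{eqn:integrability}. By the upper-bound property of the essential supremum, for every $\mathbb{F}^i$-stopping time $\theta > t_0$,
\begin{equation}\label{eqn:plan-star}
\mathbb{E}^i[\, X^i_{\sigma^i \wedge \theta} - X^i_{t_0} - M\,\mathbbm{1}\{ t_0 < \sigma^i \leq \theta \} \mid \mathcal{F}^i(t_0) \,] \leq 0 .
\end{equation}
Since $M$ is $\mathcal{H}^i_\pi(t_0)$-measurable and the denominator $\mathbb{P}(t_0 < \sigma^i \leq \hat{\tau} \mid \mathcal{H}^i_\pi(t_0))$ is strictly positive (Assumption 2.b, as noted after Definition \ref{def:block-values}), it suffices to prove the $\mathbb{H}^i_\pi$-analogue of \eqref{eqn:plan-star}: for every $\hat{\tau} \in \hat{\mathbb{H}}^i_\pi(t_0)$,
\begin{equation}\label{eqn:plan-starstar}
\mathbb{E}[\, X^i_{\sigma^i \wedge \hat{\tau}} - X^i_{t_0} - M\,\mathbbm{1}\{ t_0 < \sigma^i \leq \hat{\tau} \} \mid \mathcal{H}^i_\pi(t_0) \,] \leq 0 .
\end{equation}
Dividing \eqref{eqn:plan-starstar} by the denominator gives $\rho^i_\pi(t_0,\hat{\tau}) \leq M$ ($\mathbb{P}$-a.e.), and taking the essential supremum over $\hat{\tau}$ yields \eqref{eqn:lost-count}.

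The crux is to convert $\hat{\tau}$, which may react to the outcomes of the other bandits, into an honest $\mathbb{F}^i$-stopping time. Write $\mathcal{F}^i_\infty = \bigvee_{t \geq 0}\mathcal{F}^i(t)$ and $\mathcal{G}^{-i} = \bigotimes_{j \neq i}\mathcal{F}^j_\infty$ for the $\sigma$-algebra generated by all processes $X^j,\sigma^j$ with $j \neq i$; by independence $\mathcal{F}^i_\infty$ and $\mathcal{G}^{-i}$ are independent. First I would establish the structural inclusion $\mathcal{H}^i_\pi(t) \subseteq \mathcal{F}^i(t) \otimes \mathcal{G}^{-i}$ for each $t$: because $\pi$ is non-anticipatory and $T^i_\pi(S^i_\pi(t)) = t$, the scheduling up to round $S^i_\pi(t)$ — and hence each index $T^j_\pi(S^i_\pi(t))$ and each generator $\{T^j_\pi(S^i_\pi(t)) = m,\ (X^j_0,\ldots,X^j_m) \in B\}$ of $\mathcal{H}^i_\pi(t)$ — uses bandit $i$'s history only through $\mathcal{F}^i(t)$, together with information in $\mathcal{G}^{-i}$. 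Since the $\omega^{-i}$-section of any set in $\mathcal{F}^i(t) \otimes \mathcal{G}^{-i}$ lies in $\mathcal{F}^i(t)$, it follows that for each fixed $\omega^{-i}$ the section $\theta := \hat{\tau}(\cdot,\omega^{-i})$ is, as a function of $\omega^i$, an $\mathbb{F}^i$-stopping time strictly greater than $t_0$.

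Granting this, I would condition on the finer $\sigma$-algebra $\mathcal{F}^i(t_0) \otimes \mathcal{G}^{-i}$, which contains $\mathcal{H}^i_\pi(t_0)$, and freeze $\omega^{-i}$. The integrand depends on the other bandits only through the frozen rule $\theta$ and otherwise only on bandit $i$ and the $\mathcal{F}^i(t_0)$-measurable $M$; hence by the independence of $(X^i,\sigma^i)$ from $\mathcal{G}^{-i}$,
\begin{equation}
\mathbb{E}[\, X^i_{\sigma^i \wedge \hat{\tau}} - X^i_{t_0} - M\,\mathbbm{1}\{ t_0 < \sigma^i \leq \hat{\tau} \} \mid \mathcal{F}^i(t_0) \otimes \mathcal{G}^{-i} \,] = \mathbb{E}^i[\, X^i_{\sigma^i \wedge \theta} - X^i_{t_0} - M\,\mathbbm{1}\{ t_0 < \sigma^i \leq \theta \} \mid \mathcal{F}^i(t_0) \,] \leq 0,
\end{equation}
the final inequality being \eqref{eqn:plan-star} applied to the $\mathbb{F}^i$-stopping time $\theta$. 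Taking conditional expectation with respect to $\mathcal{H}^i_\pi(t_0)$ and invoking the tower property (legitimate by the inclusion above) then establishes \eqref{eqn:plan-starstar}. Assumption 2.a ($\sigma^i < \infty$ a.s.) makes $X^i_{\sigma^i \wedge \hat{\tau}}$ well-defined even when $\hat{\tau} = \infty$, and \eqref{eqn:integrability} supplies the integrability needed throughout.

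The step I expect to be the main obstacle is the structural inclusion $\mathcal{H}^i_\pi(t) \subseteq \mathcal{F}^i(t) \otimes \mathcal{G}^{-i}$ together with the section argument: one must check carefully that, despite the random interleaving of activations, the indices $T^j_\pi(S^i_\pi(t))$ require no information about bandit $i$ beyond local time $t$, so that freezing the other bandits genuinely produces an $\mathbb{F}^i$-stopping time rather than one that anticipates bandit $i$'s own future. This is precisely where the non-anticipatory restriction defining $\mathcal{P}$ and the product structure of $(\Omega,\mathcal{G},\mathbb{P})$ are essential; everything else is bookkeeping with conditional expectations.
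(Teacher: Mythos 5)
Your proposal is correct and follows essentially the same route as the paper's proof: both reduce the claim to the penalized inequality $\mathbb{E}\left[ X^i_{\sigma^i\wedge\hat\tau}-X^i_{t_0}-M\,\mathbbm{1}\{t_0<\sigma^i\le\hat\tau\}\mid\cdot\right]\le 0$, freeze the sample paths of the other bandits, observe that the section of an $\mathbb{H}^i_\pi$-stopping time is an $\mathbb{F}^i$-stopping time, and integrate back using the independence of the bandits. The only cosmetic difference is in the final bookkeeping: the paper verifies the conditional inequality by testing against product rectangles in $\mathcal{H}^i_\pi(t_0)$ and invoking a monotone-class argument, whereas you condition on the finer product $\sigma$-algebra $\mathcal{F}^i(t_0)\otimes\mathcal{G}^{-i}$ and tower down --- the same Fubini argument in different clothing.
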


See Section \ref{sec:exchange} for its proof. 

The following proposition provides, using $\rho^i$ and $\nu^i_\pi$,   expressions for the incremental reward gained through  consecutive or  under $\pi$ activation of a block.

\begin{proposition}\label{prop:equiv-blocks}
For each bandit $i$, the following relations hold for any $\mathbb{F}^i$-stopping times $\tau' < \tau''$ where the quantities are well defined. Equality also holds when conditioning with respect to the initial information, $\mathcal{F}^i(0)$, $\mathcal{G}_0$ respectively via the tower property.
\begin{equation}\label{eq:prop21}
\mathbb{E}^i \left[ X^i_{ \sigma^i \wedge \tau'' } - X^i_{\tau'} \big| \mathcal{F}^i(\tau') \right]   = \mathbb{E}^i \left[ \sum_{t = \tau'}^{\tau''-1} \rho^i(\tau', \tau'') \mathbbm{1}_{ \{ \sigma^i = t + 1 \} } \big| \mathcal{F}^i(\tau') \right]
\end{equation}
\begin{equation}\label{eq:prop22}
\mathbb{E} \left[ X^i_{ T^i_\pi(\sigma_\pi) \wedge \tau'' } - X^i_{\tau'} \big| \mathcal{H}^i_\pi(\tau') \right]  = \mathbb{E} \left[ \sum_{t = \tau'}^{\tau''-1} \nu^i_\pi(\tau', \tau'') \mathbbm{1}_{ \{ \sigma_\pi = S^i_\pi(t) + 1 \} } \big| \mathcal{H}^i_\pi(\tau') \right] 
\end{equation}
\end{proposition}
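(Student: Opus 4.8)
The plan is to prove both identities by the same mechanism: recognizing that the block value $\rho^i(\tau',\tau'')$ (respectively $\nu^i_\pi(\tau',\tau'')$) is, by Definition \ref{def:block-values}, precisely the ratio of the numerator expectation to the halting-probability denominator, so that multiplying it back by the relevant halting indicator and summing should reconstitute the numerator. Concretely, for \eqref{eq:prop21}, I would start from the right-hand side, pull the $\mathcal{F}^i(\tau')$-measurable factor $\rho^i(\tau',\tau'')$ out of the conditional expectation (it is measurable with respect to the conditioning $\sigma$-field, being defined in terms of $X^i_{\tau'}$ and conditional quantities at time $\tau'$), and observe that
\begin{equation}
\mathbb{E}^i\!\left[\sum_{t=\tau'}^{\tau''-1}\mathbbm{1}_{\{\sigma^i = t+1\}}\,\big|\,\mathcal{F}^i(\tau')\right] = \mathbb{P}^i\!\left(\tau' < \sigma^i \leq \tau'' \,\big|\, \mathcal{F}^i(\tau')\right),
\end{equation}
which is exactly the denominator of $\rho^i(\tau',\tau'')$ in \eqref{eqn:block-value}. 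Hence the right side collapses to the numerator $\mathbb{E}^i[X^i_{\sigma^i \wedge \tau''} - X^i_{\tau'}\mid\mathcal{F}^i(\tau')]$, which is the left side.

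**For the second identity** \eqref{eq:prop22} I would run the identical argument in the global space: $\nu^i_\pi(\tau',\tau'')$ is $\mathcal{H}^i_\pi(\tau')$-measurable and factors out, and the sum of indicators $\sum_{t=\tau'}^{\tau''-1}\mathbbm{1}_{\{\sigma_\pi = S^i_\pi(t)+1\}}$ has conditional expectation equal to the denominator of $\nu^i_\pi$ in \eqref{eqn:pi-value}, namely $\mathbb{P}(\tau' < \sigma^i \leq \tau^i_\pi \wedge \tau''\mid\mathcal{H}^i_\pi(\tau'))$. The key translation here is that the event $\{\sigma_\pi = S^i_\pi(t)+1\}$ — the game halting at the round immediately after bandit $i$'s $t$-th activation — corresponds, via the identity $T^i_\pi(\sigma_\pi) = \sigma^i \wedge \tau^i_\pi$ established in the proof of Theorem \ref{thm:a-value-comparison-theorem}, to the event that bandit $i$ is the halting cause at local time $t+1$, i.e. $\{\sigma^i = t+1\} \cap \{t+1 \leq \tau^i_\pi\}$. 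Summing these disjoint events over $t \in [\tau',\tau''-1]$ yields exactly $\{\tau' < \sigma^i \leq \tau^i_\pi \wedge \tau''\}$, matching the denominator.

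**The main obstacle** I anticipate is the careful bookkeeping for \eqref{eq:prop22}: one must verify that the indicator $\mathbbm{1}_{\{\sigma_\pi = S^i_\pi(t)+1\}}$ correctly captures ``halting due to bandit $i$ on its $(t{+}1)$-th activation'' and that these indicator events are mutually disjoint over $t$ (so no double counting), and that their union over the stated range equals the denominator event. This requires unpacking $\sigma_\pi = \min_j\{S^j_\pi(\sigma^j-1)\}+1$ and confirming that $\sigma_\pi = S^i_\pi(t)+1$ forces both $\sigma^i = t+1$ and that $i$ is the first bandit to halt, which is precisely $t+1 \leq \tau^i_\pi$. There is a subtlety of measurability — whether $\mathbbm{1}_{\{\sigma_\pi = S^i_\pi(t)+1\}}$ depends on information from the other bandits — but since the identity is asserted as an equality of conditional expectations given $\mathcal{H}^i_\pi(\tau')$ (which contains global information), this causes no difficulty for the sum-and-factor argument; the independence of the other bandits is what the \emph{next} result, Proposition \ref{prop:exchange}, exploits, not this one.

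**A minor point** to handle cleanly is the upper limit of the sum: both sides truncate at $\tau''$ via the $\sigma^i \wedge \tau''$ (resp. $T^i_\pi(\sigma_\pi)\wedge\tau''$) cap, and I would confirm that restricting the indicator sum to $t \leq \tau''-1$ is consistent with this truncation, namely that contributions from $\sigma^i > \tau''$ vanish on both sides simultaneously. Finally, the claimed extension to conditioning on $\mathcal{F}^i(0)$ or $\mathcal{G}_0$ follows immediately by the tower property, as stated.
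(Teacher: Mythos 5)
Your proposal is correct and follows essentially the same route as the paper: the paper's proof likewise factors the $\mathcal{F}^i(\tau')$- (resp.\ $\mathcal{H}^i_\pi(\tau')$-) measurable block value out of the conditional expectation and identifies the conditional expectation of the indicator sum with the denominator of $\rho^i$ (resp.\ $\nu^i_\pi$), so the ratio times the denominator reconstitutes the numerator. Your additional bookkeeping on the disjointness of the events $\{\sigma_\pi = S^i_\pi(t)+1\}$ and their union matching $\{\tau' < \sigma^i \leq \tau^i_\pi \wedge \tau''\}$ is exactly the ``observation'' the paper leaves implicit.
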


\begin{proof}
The above equations follow directly from Eqs. (\ref{eqn:block-value}), (\ref{eqn:pi-value}), observing the following relations:
\begin{equation}
\begin{split}
\mathbb{P}^i\left( t' < \sigma^i \leq t'' \big| \mathcal{F}^i(t') \right) & = \mathbb{E}^i \left[ \sum_{t = t'}^{t''-1} \mathbbm{1}_{ \{ \sigma^i = t + 1 \} } \big| \mathcal{F}^i(t') \right], \\
\mathbb{P}\left( t' < \sigma^i \leq \tau^i_\pi \wedge t'' \big| \mathcal{H}^i_\pi(t') \right) & = \mathbb{E} \left[ \sum_{t = t'}^{t''-1}  \mathbbm{1}_{ \{ \sigma_\pi = S^i_\pi(t) + 1 \} } \big| \mathcal{H}^i_\pi(t') \right].
\end{split}
\end{equation}
\qed
\end{proof}

\subsection{Solo Payout Indices and Times}

Theorem \ref{thm:a-value-comparison-theorem} indicates the significance of the following quantity.

\begin{definition}[The Solo-Payout Index]\label{def:res-index}
For any $t < \sigma^i$, the incremental \emph{Solo-Payout Index at $t$} is defined to be
\begin{equation}
\rho^i(t) = \underset{ \tau \in \hat{\mathbb{F}}^i (t) }{ \esssup} ~\rho^i(t, \tau).
\end{equation}
\end{definition}

This index can be interpreted as the maximal quotient of ``incremental reward'' over ``probability of termination/halting'' as in Eq. \eqref{eqn:block-value}. 
 \cc{sonin2011generalized}   defined this index for the case of finite state   Markov chain reward processes,  in order to provide an efficient computation of the Gittins indices of all states.

The following result demonstrates that $\rho^i(t)$ is realized as the value of {\sl some} block from time $t$, i.e.,   for some $\tau > t$, $\rho^i(t) = \rho^i(t, \tau) \mbox{ ($\mathbb{P}^i$-a.e.).}$ As such, $\rho^i(t)$ represents the {\sl maximal block value achievable from process $i$ from time $t$}.

\begin{proposition}\label{prop2}
For any time $t_0 < \sigma^i$, there exists a $\tau \in  \hat{ \mathbb{F}}^i  (t_0)$ such that $\rho^i(t_0) = \rho^i(t_0, \tau) \mbox{ \emph{($\mathbb{P}^i$-a.e.)}}$.
\end{proposition}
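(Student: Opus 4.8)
The plan is to linearise the fractional objective and recognise the resulting problem as an optimal stopping problem that is solved by the first entry into the continuation region. Fix $t_0$, work on the event $\{t_0 < \sigma^i\}$, and write $\lambda = \rho^i(t_0)$, which is $\mathcal{F}^i(t_0)$-measurable and finite ($\mathbb{P}^i$-a.e.) by Assumptions 1 and 2. For $\tau \in \hat{\mathbb{F}}^i(t_0)$ set $N(\tau) = \mathbb{E}^i[X^i_{\sigma^i \wedge \tau} - X^i_{t_0} \mid \mathcal{F}^i(t_0)]$ and $D(\tau) = \mathbb{P}^i(t_0 < \sigma^i \le \tau \mid \mathcal{F}^i(t_0)) > 0$, so that $\rho^i(t_0,\tau) = N(\tau)/D(\tau)$. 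Since $\lambda$ is $\mathcal{F}^i(t_0)$-measurable it factors through the conditioning, and the identity $\rho^i(t_0,\tau) = \lambda$ is equivalent to $\mathbb{E}^i[Y^\lambda_\tau \mid \mathcal{F}^i(t_0)] = 0$, where $Y^\lambda_\tau = X^i_{\sigma^i \wedge \tau} - X^i_{t_0} - \lambda\,\mathbbm{1}\{t_0 < \sigma^i \le \tau\}$; moreover, by the definition of $\lambda$ as an essential supremum, $\mathbb{E}^i[Y^\lambda_\tau \mid \mathcal{F}^i(t_0)] = D(\tau)\,(\rho^i(t_0,\tau)-\lambda) \le 0$ for every admissible $\tau$. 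It therefore suffices to exhibit a single $\tau^*$ attaining value $0$ in this stopping problem.

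Next I would record the lattice (upward-directed) property of the family $\{\rho^i(t_0,\tau)\}$: for $\tau_1,\tau_2 \in \hat{\mathbb{F}}^i(t_0)$ the set $A = \{\rho^i(t_0,\tau_1)\ge \rho^i(t_0,\tau_2)\}$ lies in $\mathcal{F}^i(t_0)$, so $\tau_3 = \tau_1\mathbbm{1}_A + \tau_2\mathbbm{1}_{A^c}$ is again an $\mathbb{F}^i$-stopping time and, because $A$ factors out of both the conditional numerator and denominator, $\rho^i(t_0,\tau_3) = \max(\rho^i(t_0,\tau_1),\rho^i(t_0,\tau_2))$. Hence there is a sequence $\tau_n$ with $\rho^i(t_0,\tau_n)\uparrow\lambda$ ($\mathbb{P}^i$-a.e.), and since $D(\tau_n)\in(0,1]$ we obtain $\mathbb{E}^i[Y^\lambda_{\tau_n} \mid \mathcal{F}^i(t_0)] = D(\tau_n)(\rho^i(t_0,\tau_n)-\lambda)\to 0$ from below; combined with the uniform bound $\mathbb{E}^i[Y^\lambda_\tau \mid \mathcal{F}^i(t_0)]\le 0$, this shows the essential supremum of the linearised objective equals $0$.

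For attainment I would invoke optimal stopping theory. Since $|Y^\lambda_\tau| \le 2\sup_n |X^i_n| + |\lambda|$, Assumption 1 makes the reward family uniformly integrable conditionally on $\mathcal{F}^i(t_0)$ (class (D)), so the Snell envelope $U$ of $(Y^\lambda_t)_{t > t_0}$ exists and the stopping time $\tau^* = \inf\{t > t_0 : U_t = Y^\lambda_t\}$ attains the conditional essential supremum, i.e. $\mathbb{E}^i[Y^\lambda_{\tau^*} \mid \mathcal{F}^i(t_0)] = 0$. Crucially, $Y^\lambda_t$ is constant in $t$ for $t \ge \sigma^i$ (the reward "freezes" once bandit $i$ has halted), so $U_t = Y^\lambda_t$ for all $t \ge \sigma^i$ and hence $\tau^* \le \sigma^i$; Assumption 2.a then guarantees $\tau^*$ is finite and lies in $\hat{\mathbb{F}}^i(t_0)$. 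Unwinding the linearisation gives $\rho^i(t_0,\tau^*) = \lambda = \rho^i(t_0)$ ($\mathbb{P}^i$-a.e.).

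The main obstacle is this final attainment step: producing one stopping time that realises the essential supremum simultaneously (a.e.), rather than a different optimiser for each $\omega$. The two ingredients that make it work are the integrability of Assumption 1, which ensures the Snell envelope exists and the value is genuinely attained rather than merely approached, and the almost-sure finiteness of $\sigma^i$ from Assumption 2.a, which—via the freezing of $Y^\lambda$ past $\sigma^i$—forces the optimal rule to stop at a finite time and rules out the degenerate "stop at $\infty$" behaviour illustrated in the remark following Theorem 1. Some care is also needed to treat the $\mathcal{F}^i(t_0)$-measurable multiplier $\lambda$ as a constant under the conditioning, which is justified by passing to a regular conditional probability given $\mathcal{F}^i(t_0)$.
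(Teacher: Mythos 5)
Your proposal is correct and follows essentially the same route as the paper: linearise the ratio with the $\mathcal{F}^i(t_0)$-measurable multiplier $\lambda=\rho^i(t_0)$, show the value of the resulting optimal stopping problem is zero, and attain it with the first-entrance stopping rule for the Snell envelope (the paper's Lemma~\ref{lem:optional}), using the freezing of the reward past $\sigma^i$ together with Assumption 2.a to guarantee $\tau^*\le\sigma^i<\infty$. The only cosmetic difference is the middle step: the paper obtains the value-zero identity from the self-improving bound $\rho^i(t_0,\hat{\tau})\le\rho^i(t_0)-\epsilon$ with $\epsilon$ the negative of the linearised value, whereas you use upward-directedness of the family $\bigl\{\rho^i(t_0,\tau)\bigr\}$ to extract a maximising sequence; both arguments are valid.
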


The proof  is relegated to Section \ref{sec:exchange}, since it specialized  and not the focus of this paper.

The solo-payout indices and their realizing blocks provide a natural time scale with which to view a process, in terms of a sequence of blocks. In particular, we define the following sequence:

\begin{definition}[Solo-Payout Index Times]\label{def:ris-sequence}
Define a sequence of $\mathbb{F}^i$-stopping times $\{ \tau^i_k \}_{k \geq 0}$ in the following way, that $\tau^i_0 = 0$, and for $k > 0$, 
\begin{equation}\label{eqn:gittins-sequence-definitions}
\tau^i_{k+1} =  \text{\emph{arg}}\esssup \{ \rho^i(\tau^i_k, \tau) : \tau \in \hat{\mathbb{F}}^i(\tau^i_k) \}.
\end{equation}  
\end{definition}

In the case that $\tau^i_k = \sigma^i$ for some $k$, then $\tau^i_{k'}$ is taken to be infinite for all  $k'>k$. In the case that $\tau^i_k < \sigma^i$, we have that $\rho^i(\tau^i_k) = \rho^i(\tau^i_k, \tau^i_{k+1})$. The question of whether the `arg$\esssup$' exists is resolved in the positive by Proposition \ref{prop2}; if there is more than one stopping time that attains the `arg$\esssup$', we take $\tau^i_{k+1}$ to be the one demonstrated by the application of Lemma \ref{lem:optional} in the proof of Proposition \ref{prop2}.

Using this sequence of stopping times, we partition the local process times $\mathbb{N}^i=\{0,1,2,\ldots\}$ into 
$$\mathbb{N}^i=[0,\tau^i_1)\cup[\tau^i_1,\tau^i_2)\cup[\tau^i_2,\tau^i_3)\cup\ldots .$$
One important property of this partition is the following:

\begin{proposition}[Solo-Payout Indices are Non-Increasing over Index Times]\label{prop:gittins-decreasing}

For any $k > 0$ such that $\tau^i_k < \sigma^i$, the following is true: $\rho^i(\tau^i_{k-1}) \geq \rho^i(\tau^i_k) \mbox{  \emph{($\mathbb{P}^i$-a.e.).}}$
\end{proposition}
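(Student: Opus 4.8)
The plan is to reduce the statement to a comparison of two \emph{consecutive} block values and then exploit the optimality built into the index times through a \emph{switching} argument. Fix $k$ with $\tau^i_k < \sigma^i$ and abbreviate $a = \tau^i_{k-1}$, $b = \tau^i_k$, $c = \tau^i_{k+1}$; since $a < b < \sigma^i$ on this event, the remark following Definition \ref{def:ris-sequence} gives $\rho^i(\tau^i_{k-1}) = \rho^i(a,b)$ and $\rho^i(\tau^i_k) = \rho^i(b,c)$ ($\mathbb{P}^i$-a.e.\ on $\{\tau^i_k < \sigma^i\}$). It therefore suffices to show $\rho^i(b,c) \le \rho^i(a,b)$ ($\mathbb{P}^i$-a.e.) on this event. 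The governing fact is that $b$ \emph{realizes the argsup from} $a$, i.e.\ $\rho^i(a,\tau) \le \rho^i(a,b)$ for every $\tau \in \hat{\mathbb{F}}^i(a)$, and I will feed a cleverly chosen $\tau$ into this inequality.

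Writing $N(t',t'') = \mathbb{E}^i[X^i_{\sigma^i \wedge t''} - X^i_{t'} \mid \mathcal{F}^i(t')]$ and $D(t',t'') = \mathbb{P}^i(t' < \sigma^i \le t'' \mid \mathcal{F}^i(t'))$ for the numerator and denominator of $\rho^i(t',t'')$, the first step is an additive decomposition across the split point $b$. Splitting $X^i_{\sigma^i\wedge c} - X^i_a = (X^i_{\sigma^i\wedge b} - X^i_a) + \mathbbm{1}\{b < \sigma^i\}(X^i_{\sigma^i\wedge c} - X^i_b)$ and conditioning the last term on $\mathcal{F}^i(b)$ (tower property, pulling out indicators of $\mathcal{F}^i(b)$-events) yields, for any $A \in \mathcal{F}^i(b)$ with $A \subseteq \{b<\sigma^i\}$, a \emph{switching} stopping time $\tau^* := b\,\mathbbm{1}_{A^c} + c\,\mathbbm{1}_A$ satisfying $N(a,\tau^*) = N(a,b) + \mathbb{E}^i[\mathbbm{1}_A N(b,c)\mid\mathcal{F}^i(a)]$ and $D(a,\tau^*) = D(a,b) + \mathbb{E}^i[\mathbbm{1}_A D(b,c)\mid\mathcal{F}^i(a)]$; one checks that $\tau^*$ is an $\mathbb{F}^i$-stopping time lying in $\hat{\mathbb{F}}^i(a)$.

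Writing $\hat N = \mathbb{E}^i[\mathbbm{1}_A N(b,c)\mid\mathcal{F}^i(a)]$ and $\hat D = \mathbb{E}^i[\mathbbm{1}_A D(b,c)\mid\mathcal{F}^i(a)]$, the quantity $\rho^i(a,\tau^*) = (N(a,b)+\hat N)/(D(a,b)+\hat D)$ is the \emph{mediant} of $\rho^i(a,b)$ and $\hat N/\hat D$. Optimality of $b$ gives $\rho^i(a,\tau^*) \le \rho^i(a,b)$, and clearing denominators turns this into $\hat N\, D(a,b) \le N(a,b)\,\hat D$, equivalently $\mathbb{E}^i[\mathbbm{1}_A D(b,c)(\rho^i(b,c) - \rho^i(a,b))\mid\mathcal{F}^i(a)] \le 0$ for \emph{every} such $A$ (using $N(b,c) = D(b,c)\rho^i(b,c)$ and that $\rho^i(a,b)$ is $\mathcal{F}^i(a)$-measurable). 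To finish, I localize: if the claim failed, then $A := \{b<\sigma^i\}\cap\{\rho^i(b,c) > \rho^i(a,b)\} \in \mathcal{F}^i(b)$ would have positive $\mathbb{P}^i$-measure. On $A$ the integrand $\mathbbm{1}_A D(b,c)(\rho^i(b,c)-\rho^i(a,b))$ is strictly positive (since $D(b,c)>0$ a.e.\ on $\{b<\sigma^i\}$ by Assumption 2.b) and it vanishes off $A$; hence its $\mathcal{F}^i(a)$-conditional expectation is positive on a set of positive measure, contradicting the displayed inequality for this very $A$. Thus $\mathbb{P}^i(A)=0$, which is the assertion.

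The main obstacle is exactly this last localization: the naive mediant bound only shows that a $D(b,c)$-\emph{weighted average} of $\rho^i(b,c)$ is dominated by $\rho^i(a,b)$, which is strictly weaker than the desired pointwise (a.e.) inequality. The resolution is to run the switching construction not for a single fixed continuation but for an \emph{arbitrary} event $A \in \mathcal{F}^i(b)$, so that the optimality inequality holds simultaneously for all $A$; taking $A$ to be precisely the putative violation set upgrades the averaged bound to the pointwise one. Secondary care is needed to confirm that $\tau^*$ is genuinely non-anticipatory (an $\mathbb{F}^i$-stopping time) and that all denominators are strictly positive so the ratios are well defined, both of which follow from $A \in \mathcal{F}^i(b)$ and Assumption 2.b.
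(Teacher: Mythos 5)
Your argument is correct, and it takes a genuinely different route from the paper's. The paper's proof leans on the \emph{specific} maximizer supplied by Lemma \ref{lem:optional}: because $\tau^i_k$ is taken to be the Snell-type stopping time $\tau^*$ (the first time the conditional essential supremum of $Z^i_t = X^i_{\sigma^i\wedge t} - X^i_{\tau^i_{k-1}} - \rho^i(\tau^i_{k-1})\mathbbm{1}_{\{\tau^i_{k-1}<\sigma^i\le t\}}$ drops to $Z^i_t$ itself), the inequality $\esssup_{\tau'\in\hat{\mathbb{F}}^i(\tau^i_k)}\mathbb{E}^i[Z^i_{\tau'}-Z^i_{\tau^i_k}\mid\mathcal{F}^i(\tau^i_k)]\le 0$ is available for free at $\tau^i_k$, and a one-line rearrangement gives $\rho^i(\tau^i_k,\tau')\le\rho^i(\tau^i_{k-1})$ for every $\tau'$, whence the claim after taking the essential supremum. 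You instead use only the weaker fact that $\tau^i_k$ \emph{attains} $\rho^i(\tau^i_{k-1})$ (and $\tau^i_{k+1}$ attains $\rho^i(\tau^i_k)$), recovering the missing pointwise information through the switching family $\tau^*_A = b\,\mathbbm{1}_{A^c}+c\,\mathbbm{1}_A$ over arbitrary $A\in\mathcal{F}^i(b)$, the mediant inequality, and the localization to the putative violation set. The trade-off is clear: the paper's proof is shorter but is tied to the particular choice of $\tau^i_{k+1}$ stipulated after Definition \ref{def:ris-sequence} (``the one demonstrated by the application of Lemma \ref{lem:optional}''), whereas yours proves the monotonicity for \emph{any} measurable selection of maximizers, at the cost of the extra bookkeeping in the additive decomposition across $b$ (where your restriction $A\subseteq\{b<\sigma^i\}$ is exactly what makes $\hat N$ and $\hat D$ come out clean, and Assumption 2.b is what makes the final contradiction strict). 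Both the verification that $\tau^*_A$ is an $\mathbb{F}^i$-stopping time in $\hat{\mathbb{F}}^i(a)$ and the positivity of the denominators check out, so the argument stands at the same level of rigor as the paper's.
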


For intuition, recall the $\{ \tau^i_k \}_k$ are meant to realize successively the maximal indices of the process $\{ X^i_t \}_t$. If $\rho^i(\tau^i_{k-1}) = \rho^i(\tau^i_{k-1}, \tau^i_k) < \rho^i(\tau^i_k)$, the index from $\tau^i_{k-1}$ may be increased by taking a block that extends from $\tau^i_{k-1}$ \textit{past} $\tau^i_k$. This contradicts the idea of the $\{ \tau^i_k \}_k$ as realizing the maximal indices. The proof is relegated to   Section \ref{sec:exchange}, as technical, and not the focus of this paper.

\subsection{Equivalent Solo Payout Processes}

For each bandit, we have developed a partition of local time into blocks of activations via the solo payout index stopping times. With Proposition \ref{prop:equiv-blocks} in mind, we use these blocks to define a set of reward equivalent penultimate solo payout processes, and $\pi$-equivalent solo payout processes.

\begin{definition}\label{def:equiv}
Given the collection of \emph{reward processes} $\mathbb{X} = (X^1, ...,X^N)$, and ${ \{ \tau^i_k \} }_{k \geq 0}$ for each $i$ as in Definition \ref{def:ris-sequence}, we define:
\begin{enumerate}
\item The \emph{reward-equivalent solo payout collection} $\mathbb{Y}^X = (Y^1, ...,Y^N)$ by
\begin{equation}\label{eq:xrho}
Y^i (t) = \rho^i(\tau^i_k), \ \ \mbox{ if } \tau^i_k \leq t < \tau^i_{k+1}.
\end{equation}

\item For $\pi\in\mathcal{P}$, the \emph{$\pi$-equivalent solo payout  collection} $\mathbb{Y}^X_\pi = (Y_\pi^1, ..., Y_\pi^N)$, by
\begin{equation}\label{eq:xhat}
Y_\pi^i(t) =  \nu^i_\pi(\tau^i_k, \tau^i_{k+1}), \ \ \mbox{ if } \tau^i_k \leq t < \tau^i_{k+1} .
\end{equation}
\end{enumerate}
\end{definition}

Like $X^i$, the process $Y^i$ is defined on $(\Omega^i, \mathcal{F}^i, \mathbb{P}^i, \mathbb{F}^i)$ and is $\mathbb{F}^i$-adapted, as the $\rho^i(\tau^i_k)$ is defined by the information available locally at time $\tau^i_k$. However, as the $\nu^i_\pi(\tau^i_k, \tau^i_{k+1})$ depend on the specifics of policy $\pi$, so do the $Y^i_\pi$ processes; the $Y^i_\pi$ processes are $\mathbb{H}_\pi^i$-adapted, but not $\mathbb{F}^i$-adapted. Note, $Y^i$ is only really defined for $t < \sigma^i$, and $Y^i_\pi$ is only defined for $t$ such that $S^i_\pi(t) < \sigma_\pi$. However, since no rewards are collected from bandit $i$ after these times, this lack of definition is of no consequence. 

The following are simple, but important properties of the $\mathbb{Y}^X, \mathbb{Y}^X_\pi$ processes.

\begin{proposition}\label{prop:equiv-blocks2}
For $\pi \in \mathcal{P}$, for each $i$, and any $k$ where the following quantities are defined,
\begin{equation}\label{eq:prop21b}
\mathbb{E}^i \left[ X^i_{ \sigma^i \wedge \tau^i_{k+1}  } - X^i_{\tau^i_k} \big| \mathcal{F}^i(\tau^i_k) \right]   = \mathbb{E}^i \left[ \sum_{t = \tau^i_k}^{\tau^i_{k+1}-1} Y^i(t) \mathbbm{1}_{ \{ \sigma^i = t + 1 \} } \big| \mathcal{F}^i(\tau^i_k) \right],
\end{equation}
\begin{equation}\label{eq:prop22b}
\mathbb{E} \left[ X^i_{ T^i_\pi(\sigma_\pi)  \wedge \tau^i_{k+1} } - X^i_{\tau^i_k} \big| \mathcal{H}^i_\pi(\tau^i_k) \right]  = \mathbb{E} \left[ \sum_{t = \tau^i_k}^{\tau^i_{k+1}-1} Y_\pi^i(t) \mathbbm{1}_{ \{ \sigma_\pi = S^i_\pi(t) + 1 \} } \big| \mathcal{H}^i_\pi(\tau^i_k) \right].
\end{equation}
As with Proposition \ref{prop:equiv-blocks}, equality also holds when conditioning with respect to $\mathcal{F}^i(0), \mathcal{G}_0$.

\begin{proof}
This follows as an application of Proposition \ref{prop:equiv-blocks} and the definitions of $Y^i$, $Y^i_\pi$.
\end{proof}
\end{proposition}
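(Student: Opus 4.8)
The plan is to obtain both identities directly from Proposition \ref{prop:equiv-blocks} by taking the generic $\mathbb{F}^i$-stopping times there to be the consecutive index times, $\tau' = \tau^i_k$ and $\tau'' = \tau^i_{k+1}$, and then rewriting the resulting block values $\rho^i(\tau^i_k,\tau^i_{k+1})$ and $\nu^i_\pi(\tau^i_k,\tau^i_{k+1})$ (which are constant over the block) as the corresponding values of the $Y$-processes. By Definition \ref{def:ris-sequence}, on the event $\{\tau^i_k < \sigma^i\}$ --- precisely the regime in which the quantities in the statement are defined --- we have $\tau^i_k < \tau^i_{k+1}$ and both are $\mathbb{F}^i$-stopping times, so $(\tau^i_k, \tau^i_{k+1})$ is an admissible pair for Proposition \ref{prop:equiv-blocks}. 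Substituting it into \eqref{eq:prop21} and \eqref{eq:prop22} reproduces the left-hand sides of \eqref{eq:prop21b} and \eqref{eq:prop22b} and gives sums whose summands are $\rho^i(\tau^i_k,\tau^i_{k+1})\,\mathbbm{1}_{\{\sigma^i=t+1\}}$ and $\nu^i_\pi(\tau^i_k,\tau^i_{k+1})\,\mathbbm{1}_{\{\sigma_\pi=S^i_\pi(t)+1\}}$, respectively.

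Next I would close the gap between these summands and those in the claim using Definition \ref{def:equiv} together with the defining property of the index times. For $\tau^i_k \le t < \tau^i_{k+1}$, the process $Y^i$ is constant and equal to $\rho^i(\tau^i_k)$ by \eqref{eq:xrho}; and since $\tau^i_{k+1}$ attains the essential supremum defining $\rho^i(\tau^i_k)$ (Definition \ref{def:ris-sequence}, with attainment guaranteed by Proposition \ref{prop2}), we have $\rho^i(\tau^i_k) = \rho^i(\tau^i_k, \tau^i_{k+1})$ on $\{\tau^i_k < \sigma^i\}$. Hence $\rho^i(\tau^i_k,\tau^i_{k+1}) = Y^i(t)$ for every $t$ in the range of summation, and this equality lets me pull the block value inside the sum as $Y^i(t)$, yielding \eqref{eq:prop21b}. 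The identical argument with \eqref{eq:xhat} gives $\nu^i_\pi(\tau^i_k,\tau^i_{k+1}) = Y^i_\pi(t)$ on the block and hence \eqref{eq:prop22b}. The supplementary claim about conditioning on $\mathcal{F}^i(0)$ and $\mathcal{G}_0$ then follows from the tower property, using $\mathcal{F}^i(0) \subset \mathcal{F}^i(\tau^i_k)$ and $\mathcal{G}_0 = \bigotimes_{j} \mathcal{F}^j(0) \subset \mathcal{H}^i_\pi(\tau^i_k)$.

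The one step I would treat with care --- and essentially the only nontrivial point --- is the identification $\rho^i(\tau^i_k) = \rho^i(\tau^i_k,\tau^i_{k+1})$, since it relies on $\tau^i_k < \sigma^i$ and on the essential supremum in Definition \ref{def:ris-sequence} actually being attained, which is exactly what Proposition \ref{prop2} supplies. On the complementary degenerate events (where $\tau^i_k \ge \sigma^i$ and the block is empty) both sides of each identity vanish and the statement holds trivially; this is what the qualifier ``any $k$ where the following quantities are defined'' is handling. Past this, the proof is pure substitution, so I anticipate no real obstacle.
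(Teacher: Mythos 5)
Your proposal is correct and follows exactly the route the paper intends: the paper's proof is the one-line remark that the result ``follows as an application of Proposition \ref{prop:equiv-blocks} and the definitions of $Y^i$, $Y^i_\pi$,'' and you have simply filled in the substitution $\tau'=\tau^i_k$, $\tau''=\tau^i_{k+1}$ together with the identification $\rho^i(\tau^i_k)=\rho^i(\tau^i_k,\tau^i_{k+1})$ (which the paper records just after Definition \ref{def:ris-sequence}, via Proposition \ref{prop2}). The only simplification available is that for \eqref{eq:prop22b} no attainment argument is needed at all, since Definition \ref{def:equiv}.2 sets $Y^i_\pi(t)=\nu^i_\pi(\tau^i_k,\tau^i_{k+1})$ directly.
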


The following proposition serves as justification of the term ``equivalent'' in describing the $\mathbb{Y}^X, \mathbb{Y}^X_\pi$ collections.

\begin{proposition}\label{equiv-bandits}
For each $i$, for any policy $\pi \in \mathcal{P}$,
\begin{equation}
\mathbb{E}^i \left[ X^i_{\sigma^i} \big| \mathcal{F}^i(0) \right]   = \mathbb{E}^i \left[ Y^i(\sigma^i-1) \big| \mathcal{F}^i(0) \right],
\end{equation}
\begin{equation}\label{equiv-bandits:policy}
\mathbb{E} \left[ X^i_{T^i_\pi(\sigma_\pi)} \big| \mathcal{G}_0  \right]  = \mathbb{E} \left[ \mathbbm{1}_{ \{ i = \pi(\sigma_\pi-1) \} } Y^i_\pi(T^i_\pi(\sigma_\pi-1)) \big| \mathcal{G}_0 \right].
\end{equation}
\begin{proof}
Each follows from the corresponding equation in Proposition  \ref{prop:equiv-blocks2}, summing over $k$ and taking expectations from the initial time, via the tower property. On the right hand sides, the $X^i$ terms telescope in the sum, and $X^i_0$ is taken to be 0. On the left hand sides, the sums over $Y$ may be expressed as single terms, due to the indicators.
\end{proof}
\end{proposition}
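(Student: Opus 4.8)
The plan is to derive both identities by summing the per-block equalities of Proposition \ref{prop:equiv-blocks2} over all $k \geq 0$, after conditioning on the initial information ($\mathcal{F}^i(0)$ for the first identity, $\mathcal{G}_0$ for the second). The point is that the index times $\{\tau^i_k\}_k$ partition the local time line, so on the left-hand sides the telescoping differences $X^i_{\sigma^i \wedge \tau^i_{k+1}} - X^i_{\tau^i_k}$ collapse to a single net increment, while on the right-hand sides the halting indicators select exactly one summand out of the entire double sum over $k$ and over $t \in [\tau^i_k, \tau^i_{k+1})$.

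For the first identity, I would first invoke Assumption 2.a ($\sigma^i < \infty$ $\mathbb{P}^i$-a.e.) to obtain a $\mathbb{P}^i$-a.e. finite random index $K$ with $\tau^i_K < \sigma^i \leq \tau^i_{K+1}$, so that only the blocks $k = 0, \ldots, K$ are non-degenerate and the sum over $k$ is effectively finite. Summing \eqref{eq:prop21b} over these $k$, and using $X^i_{\tau^i_k} = X^i_{\sigma^i \wedge \tau^i_k}$ for $k \le K$ (valid since $\tau^i_k < \sigma^i$ there), the left side telescopes to $X^i_{\sigma^i \wedge \tau^i_{K+1}} - X^i_{\sigma^i \wedge \tau^i_0} = X^i_{\sigma^i} - X^i_0 = X^i_{\sigma^i}$, using $\tau^i_0 = 0$ and the normalization $X^i_0 = 0$. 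On the right side the blocks for $0 \le k \le K$ cover the local times up to and including $t = \sigma^i - 1$, and in the resulting single sum $\sum_t Y^i(t)\mathbbm{1}_{\{\sigma^i = t+1\}}$ the indicator is nonzero only at $t = \sigma^i - 1$, collapsing the double sum to $Y^i(\sigma^i-1)$. Taking $\mathbb{E}^i[\,\cdot\,|\mathcal{F}^i(0)]$ then yields the claim.

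For the second identity I would argue in the same pattern on the global space, now using \eqref{eq:prop22b} together with the relation $T^i_\pi(\sigma_\pi) = \sigma^i \wedge \tau^i_\pi$ established in the proof of Theorem \ref{thm:a-value-comparison-theorem}. Summing over $k$, the left side telescopes to $X^i_{T^i_\pi(\sigma_\pi)} - X^i_0 = X^i_{T^i_\pi(\sigma_\pi)}$. On the right the indicator $\mathbbm{1}_{\{\sigma_\pi = S^i_\pi(t)+1\}}$ is nonzero for at most one $t$, and only on the event $\{i = \pi(\sigma_\pi - 1)\}$ that bandit $i$ is the one whose activation halts the game; on that event the surviving index is $t = T^i_\pi(\sigma_\pi - 1)$. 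Hence the double sum collapses to $\mathbbm{1}_{\{i = \pi(\sigma_\pi-1)\}} Y^i_\pi(T^i_\pi(\sigma_\pi-1))$, and conditioning on $\mathcal{G}_0$ gives the second equation. Note that, unlike the left side, the right side is identically zero on the complementary event where another bandit halts the game; the identity nonetheless holds because Proposition \ref{prop:equiv-blocks2} equates the two sides only in conditional expectation, not pathwise, which is exactly the sense in which $\nu^i_\pi$, whose denominator is the probability of halting due to $i$, reconstructs the expected terminal reward of bandit $i$.

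The main obstacle I anticipate is justifying the interchange of the a priori infinite sum over $k$ with the conditional expectation, and disposing of the degenerate blocks past the halting index cleanly. I would handle this by truncating at a fixed $k = m$, applying Proposition \ref{prop:equiv-blocks2} to the finitely many blocks up to $m$, and then letting $m \to \infty$: Assumption 2.a makes the partial sums stationary beyond the a.e.-finite index $K$, while Assumption 1 ($\mathbb{E}^i[\sup_n |X^i_n|] < \infty$) supplies the integrable dominating function needed to pass the limit inside the expectation by dominated convergence. Some care is also required at the terminal block $k = K$, where truncating at $\sigma^i \wedge \tau^i_{K+1}$ (resp. $T^i_\pi(\sigma_\pi) \wedge \tau^i_{K+1}$) rather than $\tau^i_{K+1}$ is precisely what makes the telescoping close at $X^i_{\sigma^i}$ (resp. $X^i_{T^i_\pi(\sigma_\pi)}$); verifying this boundary bookkeeping is the one genuinely non-mechanical point.
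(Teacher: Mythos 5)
Your proposal is correct and follows essentially the same route as the paper: sum the block identities of Proposition \ref{prop:equiv-blocks2} over $k$, condition on the initial information via the tower property, telescope the $X^i$ increments, and collapse the $Y$ sums using the halting indicators. The extra care you take with the a.e.-finite terminal block index and the dominated-convergence justification for interchanging the infinite sum with the conditional expectation fills in details the paper leaves implicit, but does not change the argument.
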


\begin{proposition}\label{non-increasing}
For each $i$, and any   time $t > 0$ such that $Y^i(t)$ is well defined,
\begin{equation}
Y^i(t-1) \geq Y^i(t) \mbox{ ($\mathbb{P}^i$-a.e.) }.
\end{equation}
\begin{proof}
This follows immediately from Proposition \ref{prop:gittins-decreasing}, and Definition \ref{def:equiv}.1.
\end{proof}
\end{proposition}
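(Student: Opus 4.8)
The plan is to reduce the claim to the monotonicity of the indices $\rho^i(\tau^i_k)$ in $k$, established in Proposition \ref{prop:gittins-decreasing}, via a two-case analysis keyed to the block partition $\mathbb{N}^i = [0,\tau^i_1) \cup [\tau^i_1,\tau^i_2) \cup \ldots$. The key structural observation is that, by Definition \ref{def:equiv}.1, the process $Y^i$ is \emph{constant} on each half-open block, equal to $\rho^i(\tau^i_k)$ on $[\tau^i_k, \tau^i_{k+1})$. Hence the only place $Y^i$ can change value is at a block boundary $\tau^i_k$, and the consecutive local times $t-1$ and $t$ can straddle at most one such boundary.

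First I would fix $t > 0$ with $Y^i(t)$ well defined (so $t < \sigma^i$, cf.\ the remark following Definition \ref{def:equiv}) and locate $t$ in the partition: let $k$ be the unique index with $\tau^i_k \leq t < \tau^i_{k+1}$, so that $Y^i(t) = \rho^i(\tau^i_k)$. In the first case, $\tau^i_k \leq t-1$, so both $t-1$ and $t$ lie in the block $[\tau^i_k, \tau^i_{k+1})$; then $Y^i(t-1) = \rho^i(\tau^i_k) = Y^i(t)$ and the inequality holds with equality. In the remaining case, $t-1 < \tau^i_k$; combined with $t-1 \geq \tau^i_k - 1$ (from $t \geq \tau^i_k$) and integrality this forces $t = \tau^i_k$, and since $\tau^i_{k-1} < \tau^i_k$ (as $\tau^i_k \in \hat{\mathbb{F}}^i(\tau^i_{k-1})$) we get $t-1 = \tau^i_k - 1 \in [\tau^i_{k-1}, \tau^i_k)$, so $Y^i(t-1) = \rho^i(\tau^i_{k-1})$. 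Here $k \geq 1$, since $\tau^i_0 = 0$ and $t > 0$ exclude $k=0$.

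It then remains to invoke Proposition \ref{prop:gittins-decreasing} in this boundary case: since $t = \tau^i_k < \sigma^i$ by the well-definedness hypothesis, that proposition yields $\rho^i(\tau^i_{k-1}) \geq \rho^i(\tau^i_k)$, i.e.\ $Y^i(t-1) \geq Y^i(t)$ \ ($\mathbb{P}^i$-a.e.). Combining the two cases gives the claim. There is no substantive obstacle here, as the statement is essentially a corollary; the only point requiring care is matching the well-definedness of $Y^i(t)$ to the hypothesis $\tau^i_k < \sigma^i$ of Proposition \ref{prop:gittins-decreasing}, ensuring the boundary case is covered precisely when the comparison is asserted.
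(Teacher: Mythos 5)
Your proposal is correct and is precisely the argument the paper compresses into its one-line proof: $Y^i$ is constant on each block $[\tau^i_k,\tau^i_{k+1})$ by Definition \ref{def:equiv}.1, so the only nontrivial comparison is across a block boundary, where Proposition \ref{prop:gittins-decreasing} supplies $\rho^i(\tau^i_{k-1}) \geq \rho^i(\tau^i_k)$. Your explicit two-case analysis and the check that $t = \tau^i_k < \sigma^i$ matches the hypothesis of that proposition are exactly the details the paper leaves implicit.
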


\begin{theorem}[Comparison of Equivalent, $\pi$-Equivalent Solo Payout Processes]\label{cor:2}
For any $\pi \in \mathcal{P}$, for each $i$ and all time $t$ where both are defined, we have:
\begin{equation}
Y_\pi^i(t) \leq Y^i(t)\ \ \mbox{ \ \  \emph{($\mathbb{P}$-a.e.)}}.
\end{equation}
 
\begin{proof}
For such a $t$, we have for some $k$ that $\tau^i_k \leq t < \tau^i_{k+1}$, and as an application of Theorem 1,
\begin{equation}
Y_\pi^i(t) =  \nu^i_\pi(\tau^i_k, \tau^i_{k+1}) \leq 
\underset{ \mbox{\ }\tau' \in \hat{\mathbb{H}}^i_\pi (\tau^i_k) }{\esssup}~\nu^i_\pi(\tau^i_k, \tau') \leq \underset{\mbox{\ } \hat{\tau} \in \hat{\mathbb{F}}^i (\tau^i_k) }{\esssup}~\rho^i(\tau^i_k, \hat{\tau})  =   \rho^i (\tau^i_k ) = Y^i(t) \  \ \mbox{ ($\mathbb{P}$-a.e.)}.
\end{equation}
Note in the above that the first and the last  relations are  just definitions,
 the second follows naturally by comparing one instance of the function to an 
 $\esssup$ of the same function, the third is due to Theorem \ref{thm:a-value-comparison-theorem}, the fourth is due to the definition of the $ \rho^i $ function. \qed
 
\end{proof}
\end{theorem}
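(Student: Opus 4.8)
The plan is to reduce the statement to a single application of the Block Value Comparison theorem (Theorem \ref{thm:a-value-comparison-theorem}), so that the chain of inequalities essentially writes itself once the definitions are unwound. First I would fix a time $t$ at which both $Y^i(t)$ and $Y^i_\pi(t)$ are defined, and locate the unique index $k$ with $\tau^i_k \leq t < \tau^i_{k+1}$. By Definition \ref{def:equiv}, this immediately gives $Y^i_\pi(t) = \nu^i_\pi(\tau^i_k, \tau^i_{k+1})$ and $Y^i(t) = \rho^i(\tau^i_k)$, so the target inequality becomes the pointwise bound $\nu^i_\pi(\tau^i_k, \tau^i_{k+1}) \leq \rho^i(\tau^i_k)$ on the block values themselves.

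Next I would observe that $\tau^i_{k+1}$, being an $\mathbb{F}^i$-stopping time strictly larger than $\tau^i_k$, is in particular an element of $\hat{\mathbb{H}}^i_\pi(\tau^i_k)$ (cf. Remark \ref{rem:extensions} and the remark that $\mathbb{F}^i$-stopping times may be viewed as $\mathbb{H}^i_\pi$-stopping times). Hence the single value $\nu^i_\pi(\tau^i_k, \tau^i_{k+1})$ is bounded above by the essential supremum of $\nu^i_\pi(\tau^i_k, \cdot)$ taken over all of $\hat{\mathbb{H}}^i_\pi(\tau^i_k)$; this is the only genuinely trivial step, comparing one instance of a quantity to the supremum of the same family.

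The heart of the argument is then Theorem \ref{thm:a-value-comparison-theorem}. Since well-definedness of $Y^i_\pi(t)$ gives $S^i_\pi(t) < \sigma_\pi$, and $\tau^i_k \leq t$ with $S^i_\pi$ monotone yields $S^i_\pi(\tau^i_k) \leq S^i_\pi(t) < \sigma_\pi$, the hypothesis of that theorem holds at $t_0 = \tau^i_k$. Applying it to each $\tau' \in \hat{\mathbb{H}}^i_\pi(\tau^i_k)$ gives
\begin{equation}
\nu^i_\pi(\tau^i_k, \tau') \leq \underset{\hat{\tau} \in \hat{\mathbb{F}}^i(\tau^i_k)}{\esssup}~\rho^i(\tau^i_k, \hat{\tau}),
\end{equation}
and taking the essential supremum over $\tau'$ on the left preserves this bound. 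Finally I would recognize the right-hand side as $\rho^i(\tau^i_k)$ by Definition \ref{def:res-index}, and $\rho^i(\tau^i_k) = Y^i(t)$ by Definition \ref{def:equiv}, closing the chain $Y^i_\pi(t) \leq Y^i(t)$.

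I expect no serious obstacle, because all the analytic difficulty — integrating out the independent bandits and exchanging $\mathbb{H}^i_\pi$-stopping times for $\mathbb{F}^i$-stopping times — has already been absorbed into Theorem \ref{thm:a-value-comparison-theorem} and Proposition \ref{prop:exchange}. The only points requiring care are bookkeeping: confirming that $\tau^i_{k+1} \in \hat{\mathbb{H}}^i_\pi(\tau^i_k)$ so that the first supremum bound is legitimate, and confirming that well-definedness of $Y^i_\pi(t)$ propagates the condition $S^i_\pi(\tau^i_k) < \sigma_\pi$ required to invoke the comparison theorem at $t_0 = \tau^i_k$.
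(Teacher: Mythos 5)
Your proposal is correct and follows essentially the same route as the paper's own proof: identify the block index $k$, unwind the definitions to $\nu^i_\pi(\tau^i_k,\tau^i_{k+1}) \leq \rho^i(\tau^i_k)$, bound the single instance by the essential supremum over $\hat{\mathbb{H}}^i_\pi(\tau^i_k)$, and invoke Theorem \ref{thm:a-value-comparison-theorem} to pass to $\mathbb{F}^i$-stopping times. Your added bookkeeping (checking $\tau^i_{k+1} \in \hat{\mathbb{H}}^i_\pi(\tau^i_k)$ and that $S^i_\pi(\tau^i_k) < \sigma_\pi$) is a welcome, if implicit in the paper, verification of the hypotheses.
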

%Note: The lack of definition for some $t$ is of no concern, as the $t^{th}$ activation of $Y^i_\pi$ would not occur under $\pi$, nor the $t^{th}$ activation of $Y^i$ under any $\pi$.

\subsection{The Optimal Policy}

The derivation of the optimal control policy for an arbitrary collection of reward processes $\mathbb{X}$ under a collective  reward structure is all but immediate now.

\begin{theorem}[The Optimal Collective Payout Control Policy]\label{the:optimal}
For a collection of reward processes $\mathbb{X} = (X^1, X^2, \ldots, X^N)$, and the associated stopping times ${ \{ \sigma^i \} }_{i = 1, \ldots, N}$, there exists a strategy $\pi^* \in \mathcal{P}$ such that for all $\pi \in \mathcal{P}$,
\begin{equation}
V^{CP}_\pi(\mathbb{X}) \leq V^{CP}_{\pi^*}(\mathbb{X}) \mbox{ ($\mathbb{P}$-a.e.)}.
\end{equation}
In particular, such an optimal policy $\pi^*$ can be described in the following way: successively activate the bandit with the largest current solo payout index,
\begin{equation}\label{eq:roi}
\rho^i(t) = \underset{ \tau \in \hat{\mathbb{F}}^i (t) }{ \esssup} ~ \frac{ \mathbb{E}^i \left[ X^i_{ \sigma^i \wedge \tau } - X^i_{t} \big| \mathcal{F}^i(t) \right] }{ \mathbb{P}^i\left( t < \sigma^i \leq \tau \big| \mathcal{F}^i(t) \right) },
\end{equation}
for the duration of the corresponding index block.
\end{theorem}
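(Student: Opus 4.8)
The plan is to reduce the collective-payout game on $\mathbb{X}$ to a penultimate solo-payout game on the non-increasing equivalent collection $\mathbb{Y}^X$, where Theorem \ref{thm:greedy} already supplies a greedy optimum, and then to verify that the greedy (index) policy attains the resulting upper bound with equality. First I would record the basic correspondence: summing the policy-version identity of Proposition \ref{equiv-bandits} over $i$ and observing that at the penultimate round $\sigma_\pi-1$ exactly one bandit is active, one gets, for every $\pi \in \mathcal{P}$,
$$V^{CP}_\pi(\mathbb{X}) = \sum_{i=1}^N \mathbb{E}\left[ \mathbbm{1}_{\{i = \pi(\sigma_\pi - 1)\}}\, Y^i_\pi(T^i_\pi(\sigma_\pi-1)) \,\big|\, \mathcal{G}_0 \right] = V^{PSP}_\pi(\mathbb{Y}^X_\pi),$$
i.e. the collective payout of $\mathbb{X}$ under $\pi$ equals the penultimate solo payout of the $\pi$-equivalent collection $\mathbb{Y}^X_\pi$ (extending the $V^{PSP}$ notation to an arbitrary reward collection).

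Next, the upper bound. By Theorem \ref{cor:2} we have $Y^i_\pi(t) \le Y^i(t)$ ($\mathbb{P}$-a.e.) for every $i$ and $t$, so replacing $\mathbb{Y}^X_\pi$ by $\mathbb{Y}^X$ can only raise the penultimate payout: $V^{PSP}_\pi(\mathbb{Y}^X_\pi) \le V^{PSP}_\pi(\mathbb{Y}^X)$. Since $\mathbb{Y}^X$ is non-increasing (Proposition \ref{non-increasing}), Theorem \ref{thm:greedy} applies to it and furnishes a single policy $\pi^*$ dominating every $\pi$ path-by-path on $\mathbb{Y}^X$; taking $\mathbb{E}[\cdot \mid \mathcal{G}_0]$ gives $V^{PSP}_\pi(\mathbb{Y}^X) \le V^{PSP}_{\pi^*}(\mathbb{Y}^X)$. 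I would then identify this greedy policy with the index policy of the statement: because $Y^i$ is constant on each index block $[\tau^i_k, \tau^i_{k+1})$ and can decrease only across block boundaries, greedily activating the bandit of largest current $Y^i = \rho^i(\tau^i_k)$ keeps that bandit maximal until its block closes, so it is activated for the full duration of its index block, which is exactly the prescription of the theorem.

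It remains to close the chain by showing $V^{PSP}_{\pi^*}(\mathbb{Y}^X) = V^{CP}_{\pi^*}(\mathbb{X})$, and this is where I expect the real work to lie. By the correspondence above with $\pi = \pi^*$ it suffices to show that there is \emph{no dilution} under the index policy, i.e. $Y^i_{\pi^*}(t) = Y^i(t)$ for the times $t$ that matter. The key structural fact is that under $\pi^*$ each index block of each bandit is activated consecutively, with no interleaved activations of the other bandits; hence, conditional on reaching the start of bandit $i$'s block $k$, the game can halt during that block only through bandit $i$ itself. Consequently $\tau^i_{\pi^*} \wedge \tau^i_{k+1}$ and $T^i_{\pi^*}(\sigma_{\pi^*}) \wedge \tau^i_{k+1}$ collapse to $\tau^i_{k+1}$ and $\sigma^i \wedge \tau^i_{k+1}$ on the relevant event, and—after integrating out the independent non-$i$ bandits exactly as in Proposition \ref{prop:exchange}—the ratio defining $\nu^i_{\pi^*}(\tau^i_k,\tau^i_{k+1})$ reduces to the ratio defining $\rho^i(\tau^i_k,\tau^i_{k+1}) = \rho^i(\tau^i_k)$, giving $Y^i_{\pi^*} = Y^i$.

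Stringing the pieces together yields $V^{CP}_\pi(\mathbb{X}) \le V^{PSP}_\pi(\mathbb{Y}^X) \le V^{PSP}_{\pi^*}(\mathbb{Y}^X) = V^{CP}_{\pi^*}(\mathbb{X})$ for all $\pi$, which is the claim. The generic inequalities in the middle are immediate from Theorems \ref{cor:2} and \ref{thm:greedy}; the main obstacle is the final consecutiveness/no-dilution equality, since that is the only step that genuinely exploits the special structure of the index policy rather than a blanket domination bound.
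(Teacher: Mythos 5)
Your proposal is correct and follows essentially the same route as the paper: the identical chain $V^{CP}_\pi(\mathbb{X}) = V^{PSP}_\pi(\mathbb{Y}^X_\pi) \leq V^{PSP}_\pi(\mathbb{Y}^X) \leq V^{PSP}_{\pi^*}(\mathbb{Y}^X) = V^{CP}_{\pi^*}(\mathbb{X})$, justified by Propositions \ref{equiv-bandits} and \ref{non-increasing} and Theorems \ref{cor:2} and \ref{thm:greedy}. The only difference is in the final equality, which you obtain by showing $Y^i_{\pi^*} = Y^i$ (no dilution under consecutive block activation) and reusing the first identity with $\pi = \pi^*$, whereas the paper computes $V^{PSP}_{\pi^*}(\mathbb{Y}^X)$ directly by telescoping over blocks via Proposition \ref{prop:equiv-blocks2}; both variants rest on the same structural fact that under $\pi^*$ a started block is completed or halts through bandit $i$ itself, i.e.\ $T^i_{\pi^*}(\sigma_{\pi^*}) = \sigma^i \wedge \tau^i_{k^*_i}$.
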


Before giving the proof of this theorem, we give a corollary, which gives a useful alternative characterization of the policy $\pi^*$.
\begin{corollary}
An alternative characterization of the policy $\pi^*$ in Theorem \ref{the:optimal} is the following: at every round, activate the bandit with the largest current solo payout index.
\end{corollary}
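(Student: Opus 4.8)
The plan is to prove the corollary by showing that the round-by-round greedy rule generates, almost surely, the very same sequence of activations as the block-committing policy $\pi^*$ of Theorem \ref{the:optimal}. Once the two policies are seen to coincide pathwise, they collect identical rewards on every sample path, so the greedy rule inherits optimality directly from Theorem \ref{the:optimal}. I would argue by induction on the rounds, carrying the invariant that whenever a selection is made, every bandit $j$ that is not the one currently being continued sits frozen at one of its index times $\tau^j_{k_j}$, so that its current index equals the block-entry value $\rho^j(\tau^j_{k_j})$, which agrees with $Y^j(\tau^j_{k_j})$ by Definition \ref{def:equiv}.

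The induction rests on two facts. First (no premature switching): while the greedy rule continues a bandit $i$ through the interior of its current block $[\tau^i_k, \tau^i_{k+1})$, the current index $\rho^i(t)$ stays at least as large as the block-entry index $\rho^i(\tau^i_k)$; since $i$ was the maximizer over all bandits at the block's start and every other bandit remains frozen at its block-entry index, $i$ remains the maximizer throughout, so the greedy rule does not switch away. Second (switching at the boundary): at $t = \tau^i_{k+1}$ the index drops, $\rho^i(\tau^i_{k+1}) \leq \rho^i(\tau^i_k)$, by Proposition \ref{prop:gittins-decreasing}, and all bandits are then again located at block-entry times, so the greedy maximizer is computed from exactly the quantities the block policy $\pi^*$ uses to choose its next block. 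Combining the two, the greedy rule activates bandit $i$ for precisely the rounds $[\tau^i_k,\tau^i_{k+1})$ before re-selecting, i.e.\ it reproduces $\pi^*$; a common tie-breaking rule resolves equal indices identically.

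The crux — and the step I expect to be the main obstacle — is the first fact, the suffix monotonicity $\rho^i(t) \geq \rho^i(\tau^i_k)$ for $\tau^i_k \leq t < \tau^i_{k+1}$. I would obtain it by splitting the optimal block at the interior time $t$: writing the numerator and denominator of $\rho^i(\tau^i_k,\tau^i_{k+1})$ as the sum of their $[\tau^i_k,t)$ and $[t,\tau^i_{k+1})$ contributions, an additivity that follows from the tower property exactly as in Proposition \ref{prop:equiv-blocks}, with the two pieces conditioned on $\mathcal{F}^i(\tau^i_k)$ and $\mathcal{F}^i(t)$ respectively. Since $t \in \hat{\mathbb{F}}^i(\tau^i_k)$ is an admissible stopping time, the prefix ratio $\rho^i(\tau^i_k,t)$ is at most $\rho^i(\tau^i_k) = \esssup_{\hat\tau \in \hat{\mathbb{F}}^i(\tau^i_k)} \rho^i(\tau^i_k,\hat\tau)$; the mediant inequality applied to the split (denominators being positive by \eqref{eqn:transitions}) then forces the suffix ratio to be at least $\rho^i(\tau^i_k)$, whence $\rho^i(t) \geq \rho^i(t,\tau^i_{k+1}) \geq \rho^i(\tau^i_k)$.

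The delicate point is that the mediant step naturally delivers this bound only for the $\mathcal{F}^i(\tau^i_k)$-conditional ratio, whereas the greedy rule acts pathwise and needs the pointwise ($\mathbb{P}^i$-a.e.) inequality. Upgrading to the pointwise statement amounts to the characterization of $\tau^i_{k+1}$ as the first time the index falls below the block-entry level $\rho^i(\tau^i_k)$ — the standard retirement/restart-index property, cf.\ \cc{Whi80}, \cc{Kat87}, \cc{sonin2008generalized} — which I would establish using the attainment of the essential supremum from Proposition \ref{prop2} together with the optional-stopping argument underlying Proposition \ref{prop:gittins-decreasing}. This is where the real work lies; the remainder of the corollary is bookkeeping layered on top of Theorem \ref{the:optimal}.
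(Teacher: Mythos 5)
Your proposal is correct in outline, but it takes a genuinely different route from the paper. The paper's proof is a short principle-of-optimality iteration: by Theorem \ref{the:optimal} the optimal first activation is the bandit with the largest current index; if the game does not halt, the controller faces a structurally identical subproblem, so the same conclusion applies again, and one iterates until the (a.s.\ finite) halting time. You instead identify the round-by-round greedy rule with the block-committing policy $\pi^*$ pathwise, which requires the additional within-block monotonicity $\rho^i(t) \geq \rho^i(\tau^i_k)$ for $\tau^i_k \leq t < \tau^i_{k+1}$ --- a fact the paper never states or uses. You correctly flag the real obstacle: the mediant/splitting argument only yields the $\mathcal{F}^i(\tau^i_k)$-conditional (averaged) suffix bound, not the pointwise one the greedy rule needs. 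The fix you sketch is the right one, and it is available precisely because of the paper's tie-breaking convention: $\tau^i_{k+1}$ is taken to be the $\tau^*$ of Lemma \ref{lem:optional}, which is by construction the \emph{first} time $n > \tau^i_k$ at which $\esssup_{\tau} \mathbb{E}^i[Z_\tau \mid \mathcal{F}^i(n)] \leq Z_n$, and unwinding that condition gives exactly $\rho^i(n) \leq \rho^i(\tau^i_k)$; hence at interior times the index strictly exceeds the block-entry value, all frozen bandits sit at indices no larger than $\rho^i(\tau^i_k)$, and the greedy rule cannot switch mid-block. Your approach costs an extra lemma but buys a strictly stronger conclusion (the two policies coincide sample-path-wise, not merely in value) and makes explicit the classical ``index stays above its entry level until the optimal stopping time'' structure that the paper's two-line dynamic-programming argument leaves implicit.
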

\begin{proof}
From Theorem \ref{the:optimal}, it follows that the optimal first activation is to activate a bandit with the largest current solo payout index. If that activation does not halt the bandit and end the game, the controller is faced with a structurally identical decision problem. It follows that again, the optimal activation is to activate a bandit with the largest current solo payout index. This argument may be iterated until halting, which will occur in finite time by assumption on the $\{ \sigma^i \}$.
\end{proof}

{\bf Proof of Theorem \ref{the:optimal}.}
For an arbitrary policy $\pi$, and $\pi^*$ as indicated above, we establish the following relations:
\begin{equation}\label{eq:finalstep}
V^{CP}_\pi( \mathbb{X} ) = V^{PSP}_\pi( \mathbb{Y}^X_\pi ) \leq V^{PSP}_\pi( \mathbb{Y}^X) \leq V^{PSP}_{\pi^*}( \mathbb{Y}^X) = V^{CP}_{\pi^*}(\mathbb{X}) \mbox{ ($\mathbb{P}$-a.e.)},
\end{equation}
%
%\begin{enumerate}[label=Step \arabic{*}:]
%\item 

i.e., for any policy $\pi$, we have that
$
V^{CP}_\pi( \mathbb{X} ) \leq V^{CP}_{\pi^*}(\mathbb{X}) \mbox{ ($\mathbb{P}$-a.e.) }
$ and 
therefore $\pi^*$ is an optimal policy.

In the following steps we prove  relations  (\ref{eq:finalstep}).

\emph{Step 1:} $V^{CP}_\pi( \mathbb{X}) = V^{PSP}_\pi( \mathbb{Y}^X_\pi )$, \pae. 

%\mathbb{E} \left[ \sum_{t = 0}^\infty X^i_t \mathbbm{1}_{ \{ \sigma_\pi > S^i_\pi(t) \} } \big| \mathcal{G}_0  \right]  = \mathbb{E} \left[ \sum_{t = 0}^\infty Y_\pi^i(t) \mathbbm{1}_{ \{ \sigma_\pi = S^i_\pi(t) + 1 \} } \big| \mathcal{G}_0 \right].

We have, via Prop. \ref{equiv-bandits}, Eq. \eqref{equiv-bandits:policy},
\begin{equation*}
V^{CP}_\pi(\mathbb{X}) = \sum_{i = 1}^N \mathbb{E} \left[ X^i_{T^i_\pi(\sigma_\pi)} \big| \mathcal{G}_0 \right] = \sum_{i = 1}^N \mathbb{E} \left[ \mathbbm{1}_{ \{ i = \pi(\sigma_\pi-1) \} } Y^i_\pi(T^i_\pi(\sigma_\pi-1)) \big| \mathcal{G}_0 \right] = V^{PSP}_\pi( \mathbb{Y}^X_\pi ).
\end{equation*}

Note, because the $Y^i_\pi$ processes are defined in terms of $\pi$, they are not $\mathbb{F}^i$-adapted, and cannot be utilized under any other policy. However, the value $V^{PSP}_\pi( \mathbb{Y}^X_\pi )$ is well defined via the above equation.

\emph{Step 2:} 
$V^{PSP}_\pi( \mathbb{Y}^X_\pi ) \leq V^{PSP}_\pi( \mathbb{Y}^X) \mbox{ ($\mathbb{P}$-a.e.)}$.

This follows from the point-wise inequality of Theorem \ref{cor:2}, $Y^i_\pi(t) \leq Y^i(t) $ for all $t$. Note that for any $t$ where $Y^i_\pi(t)$ is not defined, the $t^{th}$ activation of $i$ does not occur under $\pi$, and no comparison is necessary.

\emph{Step 3:} 
 $V^{PSP}_\pi( \mathbb{Y}^X) \leq V^{PSP}_{\pi^*}( \mathbb{Y}^X) \mbox{ ($\mathbb{P}$-a.e.)}$. 
 
This follows simply from Theorem \ref{thm:greedy} as, by construction, the terms of each $Y^i$ process are equal to the solo payout indices of $X^i$, piecewise constant over blocks, and non-increasing.

\emph{Step 4:} $V^{PSP}_{\pi^*}( \mathbb{Y}^X) = V^{CP}_{\pi^*}(\mathbb{X}) \mbox{ ($\mathbb{P}$-a.e.). }$

Note that $\pi^*$ activates bandits consecutively over the duration of their index blocks. For a given $i$, define
\begin{equation}
k^*_i = \min_{k \geq 0} \{ S^i_{\pi^*}(\tau^i_k) \geq \sigma_\pi \},
\end{equation}
the first block of $i$ that is {\sl not} activated under $\pi^*$. Note then that for each $i$, we have the following relation
\begin{equation}
T^i_{\pi^*}(\sigma_{\pi^*}) = \sigma^i \wedge \tau^i_{k^*_i}.
\end{equation}

Expressing the value of policy $\pi^*$ relative to activations over blocks, and utilizing the tower property, we have the following equivalences:
\begin{equation}
\begin{split}
V^{PSP}_{\pi^*}( \mathbb{Y}^X ) & = \sum_{i = 1}^N \sum_{k = 0}^{\infty} \mathbb{E} \left[  \mathbbm{1}_{\{ k^*_i > k \}} \sum_{t = \tau^i_{k}}^{\tau^i_{k+1}-1} Y^i(t) \mathbbm{1}_{ \{ \sigma^i = t + 1 \} } \big| \mathcal{G}_0 \right] \\
& = \sum_{i = 1}^N \sum_{k = 0}^{\infty}  \mathbb{E} \left[ \mathbbm{1}_{\{ k^*_i > k \}} \mathbb{E} \left[ \sum_{t = \tau^i_{k}}^{\tau^i_{k+1}-1} Y^i(t) \mathbbm{1}_{ \{ \sigma^i = t + 1 \} } \big| \mathcal{H}^i_\pi(\tau^i_k) \right] \big| \mathcal{G}_0 \right] \\
& = \sum_{i = 1}^N \sum_{k = 0}^{\infty} \mathbb{E} \left[  \mathbbm{1}_{\{ k^*_i > k \}} \mathbb{E} \left[ X^i_{ \sigma^i \wedge \tau^i_{k+1}  } - X^i_{\tau^i_k} \big| \mathcal{H}^i_\pi(\tau^i_k) \right] \big| \mathcal{G}_0 \right] \\
& = \sum_{i = 1}^N \mathbb{E} \left[ X^i_{ \sigma^i \wedge \tau^i_{k^*_i}  } - X^i_{0}  \big| \mathcal{G}_0 \right]\\
&  = \sum_{i = 1}^N \mathbb{E} \left[ X^i_{ T^i_{\pi^*}(\sigma_{\pi^*})  } \big| \mathcal{G}_0 \right]  = V^{CP}_{\pi^*} ( \mathbb{X} ).
\end{split}
\end{equation}

Note the exchange over blocks of the $Y^i$ rewards for the $X^i$ rewards is due to Proposition \ref{prop:equiv-blocks2}, Eq. \eqref{eq:prop21b}, taking the extension to $\mathcal{H}^i_{\pi^*}(\tau^i_k)$ in place of $\mathcal{F}^i(\tau^i_k)$.
\qed

\begin{remark} The above theorem demonstrates a policy $\pi^* \in \mathcal{P}$ that is $\mathbb{P}$-a.e. superior (or equivalent) to every other policy $\pi \in \mathcal{P}$. However, the set of non-anticipatory policies $\mathcal{P}$ was defined in a fairly restrictive sense in Sec. \ref{sec:information}, so that the decision in any round was completely determined by the results of the past. This might be weakened to allow for randomized policies, so that the decision in a given round might depend on the results of independent events, e.g., coin flips. However, such a construction simply amounts to placing a distribution on $\mathcal{P}$. Since $\pi^*$ is $\mathbb{P}$-a.e. superior to any $\pi \in \mathcal{P}$, $\pi^*$ would be similarly superior to any policy sampled randomly from $\mathcal{P}$.
\end{remark}

The structure of the proof of Theorem \ref{the:optimal} above is based on deriving an optimality result for the collective payout model  by reducing it to an instance of the a   solo payout model. It  suggests an interesting correspondence between the two. Under the collective payout model, in any period the controller wishes to achieve via bandit activation high collective rewards   of all bandits on halting. Under a solo payout model, in any period the controller wishes to achieve via bandit activation high rewards  of a given bandit on halting. However, since (under either model) the controller can only activate one bandit at a time, under the collective payout model the controller essentially seeks in every period to maximize the change in collective reward due to a single bandit \textit{should that bandit halt}, or equivalently to maximize the change in reward \textit{of that single bandit} should that bandit halt. The collective payout model can therefore be cast as a penultimate solo payout model, where the payout on halting is based on the \textit{change} in reward of the activated bandit rather than the final collective rewards of all bandits.  This can be further seen in the following section, where   optimal index policies  for the general (penultimate and ultimate) solo payout model are given.

\section{Additional  Payout Schemes}\label{sec:alternative}

Utilizing the results of the previous section, we next provide index policies for optimizing the rewards/costs from a number of additional payout models, by reducing them to the collective payout model cf. Eq.(\ref{eq:CP}) of the previous section, and utilizing Theorem \ref{the:optimal}. We construct  the models below, specified by different ways in which rewards are received and/or costs are paid. We note that 
analogous results can be obtained for the penultimate solo payout model without  the 
monotonicity restriction of Section 3 on the underlying reward processes. They are omitted for brevity. 
 
\pbsaa The \emph{Ultimate Solo Payout} model (SP). In this model the controller aims to  
maximize the expected final reward from the bandit that halts the game, i.e., 
  the value of  a policy $\pi$ is defined as,
\begin{equation}
V^{SP}_\pi(\mathbb{X})  = \mathbb{E} \left[ X_\pi(\sigma_\pi) | \mathcal{G}_0 \right] = \sum_{i = 1}^N \mathbb{E} \left[ \mathbbm{1}_{ \{ i = \pi(\sigma_\pi-1) \} } X^i_{T^i_\pi(\sigma_\pi)} | \mathcal{G}_0\right].
\end{equation}

\pbsaa 
The \emph{Non-Halting Cost} model (NH).
In this model the controller \emph{pays a cost} based on the bandits that did not halt the game, and wishes to minimize this expected cost. The \emph{halting cost} of a policy $\pi$ is
\begin{equation}
V^{NH}_\pi(\mathbb{X})  = \mathbb{E} \left[ \sum_{i \neq \pi(\sigma_\pi-1)} X^i_{T^i_\pi(\sigma_\pi)} | \mathcal{G}_0 \right]\\
 = \sum_{i = 1}^N \mathbb{E} \left[ \mathbbm{1}_{ \{ i \neq \pi(\sigma_\pi-1) \} } X^i_{T^i_\pi(\sigma_\pi)} | \mathcal{G}_0\right].
\end{equation}

\pbsaa
The \emph{Total Profit} model (TP). In this model  to each bandit $i$ we associate a reward process $\{ R^i_t \}_{t \geq 0}$ and a cost process $\{ C^i_t \}_{t \geq 0}$. The controller gains a reward from the bandit that halts the game, and pays a cost for each bandit that does not halt. The controller wishes to maximize her expected total profit, i.e., 
  the value of  a policy $\pi$ is now defined as,  
\begin{equation}
\begin{split}
V^{TP}_\pi(\mathbb{R}, \mathbb{C}) & = \sum_{i = 1}^N \mathbb{E} \left[ \mathbbm{1}_{ \{ i = \pi(\sigma_\pi-1) \} } R^i_{T^i_\pi(\sigma_\pi)} - \mathbbm{1}_{ \{ i \neq \pi(\sigma_\pi-1) \} } C^i_{T^i_\pi(\sigma_\pi)}  | \mathcal{G}_0\right].
\end{split}
\end{equation}

\pbsaa
The \emph{Cumulative Collective Payout} model (CCP) and the Gittins Index.
In this model  the controller gains a bandit's current reward each time that bandit is chosen to be activated. Bandits that are never activated give no rewards. The controller wishes to maximize her expected total payout, i.e., 
  the value of  a policy $\pi$  is now defined as,  
\begin{equation}
V^{CCP}_\pi(\mathbb{X}) =\sum_{i = 1}^N \mathbb{E}\left[ \sum_{t = 0}^{T^i_\pi( \sigma_\pi ) - 1} X^i_t \big| \mathcal{G}_0 \right].
\end{equation}

Note, in the above expressions we take empty sums to be 0. 

For all these models, we will provide an index policy to maximize the corresponding value function as follows. 

\rbsaa For the {\sl SP model}, define a collection of reward processes $\mathbb{Z} = \{ Z^i \}_{1 \leq i \leq N}$ by for each $i$, each $t \geq 0$,
\begin{equation}
Z^i_t = \mathbbm{1}_{ \{ \sigma^i = t \} } X^i_t.
\end{equation}
Notice that at round $\sigma_\pi$, $Z^i_t = 0$ for all bandits that did not halt the game, and $Z^i_t = X^i_{\sigma^i}$ for the bandit that did halt the game. Hence the collective payout under $\mathbb{Z}$ is equal to the solo payout under $\mathbb{X}$, $V^{CP}_\pi(\mathbb{Z}) = V^{SP}_\pi(\mathbb{X})$. Applying Theorem \ref{the:optimal}, the optimal policy for the collective payout under $\mathbb{Z}$ yields an optimal policy for the solo payout under $\mathbb{X}$, and it is  given by a policy that always activates bandits according to the maximum \textit{solo payout index}:
\begin{equation}
\rho^i_{SP}(t) = \underset{ \tau \in \hat{\mathbb{F}}^i (t) }{ \esssup} ~\frac{ \mathbb{E}^i \left[ \mathbbm{1}_{ \{ \tau \geq \sigma^i \} } X^i_{ \sigma^i } \big| \mathcal{F}^i(t) \right] }{ \mathbb{P}^i\left( t < \sigma^i \leq \tau \big| \mathcal{F}^i(t) \right) }.
\end{equation}
It is interesting to observe that the policy based on the above index has a very natural interpretation, viewing the index as the maximal conditional expected payout of a bandit on its halting, i.e., the policy always activates the bandit with the largest potential payout - should it pay out.  Additionally, comparing the above index to the optimal index for the collective payout model, it is clear that the collective payout index emphasizes the ``change in reward on halting'' of a single bandit, while the solo payout index emphasizes only the final reward of a single bandit on halting. This again highlights the correspondence between these two models, as discussed at the end of Section 4.4.

\pbsaa
We   reduce the {\sl  Non-Halting Cost}  model,    to the collective payout  model in the following way. Define a collection of reward processes $\mathbb{Z} = \{ Z^i \}_{1 \leq i \leq N}$ by for each $i$, each $t \geq 0$,
\begin{equation}\label{eq:nhc}
Z^i_t = -\mathbbm{1}_{ \{ \sigma^i \neq t \} } X^i_t.
\end{equation}
Notice that at round $\sigma_\pi$, if bandit $i$ was activated to halt the game (i.e., $\pi(\sigma_\pi-1) = i$), Eq.(\ref{eq:nhc}) implies that  $Z^i_t = 0$ and $Z^j_t =  -X^j_t$, for $j\neq i$. Hence, the collective payout under $\mathbb{Z}$ is equal to the negative of the halting cost under $\mathbb{X}$: $V^{CP}_\pi(\mathbb{Z}) = - V^{NH}_\pi(\mathbb{X})$; it follows that  maximizing the collective payout under $\mathbb{Z}$  minimizes the halting cost under $\mathbb{X}$. 
Applying Theorem \ref{the:optimal}, the optimal policy for the collective payout under $\mathbb{Z}$ yields an optimal policy for the non-halting cost model under $\mathbb{X}$, and it is  given by a policy   that always activates bandits according to the minimum \textit{non-halting cost index}:
\begin{equation}
\rho^i_{NH}(t) =  \underset{ \tau \in \hat{\mathbb{F}}^i (t) }{ \esssup} ~ \frac{ \mathbb{E}^i \left[ \mathbbm{1}_{ \{ \sigma^i > \tau \} } X^i_{\tau} - X^i_t   \big| \mathcal{F}^i(t) \right] }{ \mathbb{P}^i\left( t < \sigma^i \leq \tau \big| \mathcal{F}^i(t) \right) }.
\end{equation}

 \pbsaa
For the \emph{Total Profit} model (TP) model, in order to  
  provide an index policy to maximize its value function, we reduce it to the collective payout  model in the following way. Define a collection of reward processes $\mathbb{Z} = \{ Z^i \}_{1 \leq i \leq N}$ by for each $i$, each $t \geq 0$,
\begin{equation}
Z^i_t = \mathbbm{1}_{ \{ \sigma^i = t \} } R^i_t - \mathbbm{1}_{ \{ \sigma^i \neq t \} } C^i_t.
\end{equation}
Notice that at round $\sigma_\pi$, $Z^i_t = -C^i_t$ for all bandits that did not halt the game, and $Z^i_t = R^i_{t}$ for the bandit that did halt the game. Hence the collective payout under $\mathbb{Z}$ is equal to the collective profit solo payout under $(\mathbb{R}, \mathbb{C})$, $V^{CP}_\pi(\mathbb{Z}) = V^{TP}_\pi(\mathbb{R}, \mathbb{C})$, cf. Eq.(\ref{eq:CP}). Thus, as before, 
 the optimal policy for the collective payout under $\mathbb{Z}$ yields an optimal policy for the total profit under $(\mathbb{R}, \mathbb{C})$, given by a policy that always activates bandits according to the maximum \textit{total profit index}:
\begin{equation}
\rho^i_{TP}(t) = \underset{ \tau \in \hat{\mathbb{F}}^i (t) }{ \esssup} ~ \frac{ \mathbb{E}^i \left[  \mathbbm{1}_{ \{ \sigma^i \leq \tau \} } R^i_{\sigma^i} - \mathbbm{1}_{ \{ \sigma^i > \tau \} } C^i_{\tau}+ C^i_t  \big| \mathcal{F}^i(t) \right] }{ \mathbb{P}^i\left( t < \sigma^i \leq \tau \big| \mathcal{F}^i(t) \right) }.
\end{equation}

 \pbsaa
For the \emph{Cumulative Collective Payout} model (CCP) .
In this model  the controller gains a bandit's current reward each time that bandit is chosen to be activated. Bandits that are never activated give no rewards.  
To provide an index policy to maximize this value function, we reduce it to   the collective payout  
 model, in the following way.  Define a collection of reward processes $\mathbb{Z} = \{ Z^i \}_{1 \leq i \leq N}$ by
\begin{equation}
Z^i_t = \sum_{t' = 0}^{t-1} X^i_{t'} \mbox{, \  for each $i$, each $t \geq 0$}.
\end{equation}
It follows easily that the collective payout model value under $\mathbb{Z}$ is equal to the collective cumulative payout under $\mathbb{X}$, i.e.,  $V^{CP}_\pi(\mathbb{Z}) = V^{CCP}_\pi(\mathbb{X})$. 
Thus, 
applying Theorem \ref{the:optimal}, the optimal policy for the collective payout under $\mathbb{Z}$    yields an optimal policy for the collective cumulative payout under $\mathbb{X}$, given by a policy that always activates bandits according to the maximum \textit{collective cumulative payout index}:
\begin{equation}
\rho^i_{CCP}(t) = \underset{ \tau \in \hat{\mathbb{F}}^i (t) }{ \esssup} ~ \frac{ \mathbb{E}^i \left[ \sum_{t' = t}^{\sigma^i \wedge \tau-1} X^i_{t'} \big| \mathcal{F}^i(t) \right] }{ \mathbb{P}^i\left( t < \sigma^i \leq \tau \big| \mathcal{F}^i(t) \right) }.
\end{equation}
This extension of the collective payout model  is interesting in its own right, because  it allows us to readily recover and provide new simple proofs for the classic result of \cc{Gittins:79} and the recent results in  \cc{cowan2015multi}. 

Indeed,   consider the case in which each time the controller activates a bandit, all   future expected rewards are  effectively  discounted by a factor equal to the probability of that decision not halting the game. 

In the special case that  
each halting time $\sigma^i > 0$ is a geometric random variable with a constant parameter $0 < \beta < 1$, independent of the reward processes $\mathbb{X}$, 
i.e., $\mathbb{P}^i( \sigma^i = t + 1 | \mathcal{F}^i(t) )=1-\beta$. This results in every activation discounting all future rewards by a factor of $\beta$.

 It is easy to see that
\begin{equation}\label{eq:ccp}
V^{CCP}_\pi(\mathbb{X}) = \sum_{i = 1}^N \mathbb{E}\left[ \sum_{t = 0}^{T^i_\pi( \sigma_\pi ) - 1} X^i_t \big| \mathcal{G}_0 \right] = \mathbb{E}\left[ \sum_{s = 0}^\infty \beta^s X_\pi(s) \big| \mathcal{G}_0 \right].
\end{equation}
It follows from Eq. (\ref{eq:ccp}), that maximizing the $V^{CCP}_\pi(\mathbb{X})$ under this model (with  $\mathbb{P}^i( \sigma^i = t + 1 | \mathcal{F}^i(t) )=1-\beta$, for all $t$ and all $i$) is then equivalent 
precisely the framework outlined by \cc{Gittins:79}, 
 i.e.,  total expected discounted reward of $\mathbb{X}$ for a  constant discount factor $\beta$.  
In this case, the collective cumulative payout index reduces to
\begin{equation}
\rho^i_{CCP}(t) = \underset{ \tau \in \hat{\mathbb{F}}^i (t) }{ \esssup} ~\frac{ \mathbb{E}^i \left[ \sum_{t' = t}^{\tau-1} \beta^{t'-t} X^i_{t'} \big| \mathcal{F}^i(t) \right] }{ \mathbb{E}^i\left[1 - \beta^{\tau-t}\big| \mathcal{F}^i(t) \right] } =\frac{1}{1-\beta} \underset{ \tau \in \hat{\mathbb{F}}^i (t) }{ \esssup} ~\frac{ \mathbb{E}^i \left[ \sum_{t' = t}^{\tau-1} \beta^{t'} X^i_{t'} \big| \mathcal{F}^i(t) \right] }{ \mathbb{E}^i\left[ \sum_{t' = t}^{\tau-1} \beta^{t'} \big| \mathcal{F}^i(t) \right] },
\end{equation}
where the {\sl essential sup} on the right is precisely the Gittins index for bandit $i$. As $1/(1-\beta)$ is a constant, positive factor, activating according to the maximal collective cumulative payout index and activating according to the maximal Gittins index result in equivalent, optimal policies. We also note that in this $\rho^i_{CCP}(t)$ is   the restart index cf.  
\cc{Kat87} and the generalized index of \cc{sonin2008generalized}.

 The above  is a well known interpretation  of the Gittins index problem in terms of halting bandits, but  its  treatment herein provides  a new interesting implication. In its classical form, it is not intuitively clear why the decision problem decomposes into indices that treat each bandit separately. However, framing it as a collective payout halting problem, we may make use of the previously described correspondence with the solo payout model. Reducing the Gittins model to a solo payout model, where  in every  period the controller wishes to realize the largest change in value of a single bandit on halting, provides additional insight into why  the decomposition of the decision process into treating each bandit independently holds.  

 We additionally note that the above arguments can be extended  to generalized 
 sequences of discount factors, for which  $\mathbb{P}^i( \sigma^i = t + 1 | \mathcal{F}^i(t) )=1-\beta^i_t$, and thus recover the main results of 
 \cc{cowan2015multi}.

\section{Proofs of auxiliary propositions}\label{sec:exchange}

We start with the following. 
 
 {\bf Proof of Proposition \ref{prop:exchange}}.
Without loss of generality, we may take $t_0 = 0$. Recall the definition of $\rho^i_\pi, \rho^i$:
\begin{equation}
\begin{split}
\rho^i_\pi(t', t'') & = \frac{ \mathbb{E} \left[ X^i_{ \sigma^i \wedge t'' } - X^i_{t'} \big| \mathcal{H}^i_\pi(t') \right] }{ \mathbb{P}\left( t' < \sigma^i \leq t''  \big| \mathcal{H}^i_\pi(t') \right) },\\
\rho^i(t', t'') & = \frac{ \mathbb{E}^i \left[ X^i_{ \sigma^i \wedge t'' } - X^i_{t'} \big| \mathcal{F}^i(t') \right] }{ \mathbb{P}^i\left( t' < \sigma^i \leq t'' \big| \mathcal{F}^i(t') \right) }.
\end{split}
\end{equation}
Letting $R$ denote the R.H.S. of Eq. \eqref{eqn:lost-count}, observe (by the definition of the $\esssup$) that for any $\hat{\tau} \in \hat{\mathbb{F}}^i(0)$,
\begin{equation}\label{eqn:f-eqn}
 \mathbb{E} \left[ X^i_{ \sigma^i \wedge \hat{\tau} } - X^i_{0} - R \mathbbm{1}\{ 0 < \sigma^i \leq \hat{\tau} \} \big| \mathcal{F}^i(0) \right] \leq 0 \mbox{ ($\mathbb{P}$-a.e.)}.
\end{equation}
To prove the proposition, it suffices to show that for any $\hat{\tau} \in  \hat{\mathbb{H}}_\pi^i (0)$,
\begin{equation}
 \mathbb{E} \left[ X^i_{ \sigma^i \wedge \hat{\tau} } - X^i_{0} - R \mathbbm{1}\{ 0 < \sigma^i \leq \hat{\tau} \} \big| \mathcal{H}^i_\pi(0) \right] \leq 0 \mbox{ ($\mathbb{P}$-a.e.)}.
\end{equation}
For compactness of argument, we take $N = 2$ and $i = 1$, though the following argument generalizes to arbitrary bandits in the obvious way. For notational compactness, we define $W^i_t = X^i_{ \sigma^i \wedge t } - X^i_{0} - R \mathbbm{1}\{ 0 < \sigma^i \leq t \}.$

Note that for any set $A \in \mathcal{H}^1_\pi(0)$, and any $\tau \in \hat{\mathbb{H}}^1_\pi(0)$,
\begin{equation}
\mathbb{E}\left[ \mathbbm{1}_A \mathbb{E}\left[ W^1_\tau \big| \mathcal{H}^1_\pi(0) \right]\right] = \mathbb{E}\left[ \mathbbm{1}_A W^1_\tau \right].
\end{equation}
Taking $A$ as a rectangle in $\mathcal{H}^1_\pi(0)$, $A = A_1 \times A_2$, observe that $A_1 \in \mathcal{F}^1(0)$. The indicator may be decomposed as $\mathbbm{1}_A(\omega) = \mathbbm{1}_{A_1}(\omega^1)\mathbbm{1}_{A_2}(\omega^2)$. It follows as a result of the initial integrability assumptions on the bandits, Eqs. \eqref{eqn:integrability}, \eqref{eqn:transitions}, that we may exchange the expectation over the product space for an iterated expectation:
\begin{equation}
\begin{split}
\mathbb{E}\left[ \mathbbm{1}_A W^1_\tau \right] & = \mathbb{E}^2\left[ \mathbb{E}^1\left[ \mathbbm{1}_{A_1}\mathbbm{1}_{A_2} W^1_\tau \right] \right]\\
& = \mathbb{E}^2\left[ \mathbbm{1}_{A_2} \mathbb{E}^1\left[ \mathbbm{1}_{A_1} W^1_\tau \right] \right] \\
& = \mathbb{E}^2\left[ \mathbbm{1}_{A_2} \mathbb{E}^1\left[ \mathbbm{1}_{A_1} \mathbb{E}^1\left[ W^1_\tau \big| \mathcal{F}^1(0) \right] \right]\right].
\end{split}
\end{equation}
Observe that, while $\tau$ (begin an $\mathbb{H}^1_\pi$-stopping time) may have a dependence on $\Omega^2$, inside the iterated integral with the dependence on $\Omega^2$ fixed, it is an $\mathbb{F}^i$-stopping time. Hence, as an application of Eq. \eqref{eqn:f-eqn}, we have the bound
\begin{equation}
\begin{split}
\mathbb{E}\left[ \mathbbm{1}_A W^1_\tau \right] & = \mathbb{E}^2\left[ \mathbbm{1}_{A_2} \mathbb{E}^1\left[ \mathbbm{1}_{A_1} \mathbb{E}^1\left[ W^1_\tau \big| \mathcal{F}^1(0) \right] \right]\right] \leq \mathbb{E}^2\left[ \mathbbm{1}_{A_2} \mathbb{E}^1\left[ \mathbbm{1}_{A_1} 0 \right]\right] = 0.
\end{split}
\end{equation}
Hence, for all rectangles $A \in \mathcal{H}^1_\pi(0)$, $\mathbb{E}\left[ \mathbbm{1}_A \mathbb{E}\left[ W^1_\tau \big| \mathcal{H}^1_\pi(0) \right]\right] \leq 0$. This extends via the usual monotone-class type argument to \textit{all} $A \in \mathcal{H}^1_\pi(0)$. Hence, it follows that for all $\tau \in \hat{\mathbb{H}}^1_\pi(0)$,
\begin{equation}
\mathbb{E}\left[ W^1_\tau \big| \mathcal{H}^1_\pi(0) \right] \leq 0  \mbox{ ($\mathbb{P}$-a.e.)}.
\end{equation}
This establishes the result.\qed

 The proof  of   Proposition \ref{prop2} below requires the following technical lemma. Its proof   follows along the lines of the proofs of  Theorems 4.1 - 4.3    in \cc{snell1952}, see also the Optimal Optional Stopping Lemma in \cc{derman1960replacement}.

\begin{lemma}\label{lem:optional}
In an arbitrary probability space with a filtration $\mathbb{J} = \{ \mathcal{J}_t \}_{t \geq 0}$, consider an adapted discrete-time process $\{Z_t\}_{t \geq 0}$ such that $\mathbb{E} \left[ \sup_{\mathbb{N}} \lvert Z_t \rvert \big| \mathcal{J}_0 \right] < \infty$. If the $\mathbb{J}$-stopping time $\tau^* \in \hat{\mathbb{J}}(0)$ defined by
\begin{equation}
\tau^* = \inf \{ n > 0 : \underset{ \tau \in \hat{\mathbb{J}}(n) }{\esssup} ~\mathbb{E} \left[ Z_\tau \big| \mathcal{J}_n \right] \leq Z_n  \}
\end{equation}
is almost surely finite, then
\begin{equation}
\mathbb{E} \left[ Z_{\tau^*} \big| \mathcal{J}_0 \right] = \underset{ \tau \in \hat{\mathbb{J}}(0) }{\esssup} ~\mathbb{E} \left[ Z_\tau \big| \mathcal{J}_0 \right] \mbox{ ($\mathbb{P}$-a.e.).}
\end{equation}
\end{lemma}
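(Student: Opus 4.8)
The plan is to prove the lemma by the classical Snell envelope method: I would build the value process of the optimal stopping problem, identify $\tau^*$ as the first entrance into its stopping region, and conclude via an optional stopping argument whose only delicate point is the passage to an a.s.\ finite (but unbounded) stopping time.

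\textbf{Reduction and value process.} Write $V_n = \esssup_{\tau \in \hat{\mathbb{J}}(n)} \mathbb{E}[Z_\tau \mid \mathcal{J}_n]$ for the \emph{continuation value} (the best one can do stopping strictly after $n$); the right-hand side of the lemma is exactly $V_0$. Since $\tau^* \in \hat{\mathbb{J}}(0)$, one inequality, $\mathbb{E}[Z_{\tau^*}\mid\mathcal{J}_0] \le V_0$, is immediate from the definition of the essential supremum, so the whole content is the reverse inequality, i.e.\ that $\tau^*$ attains $V_0$. I would also introduce the full envelope $W_n = \max(Z_n, V_n)$, the value that \emph{also} allows stopping at time $n$. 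The finiteness hypothesis $\mathbb{E}[\sup_{\mathbb{N}}|Z_t| \mid \mathcal{J}_0] < \infty$ guarantees that both $V_n$ and $W_n$ are finite a.e.\ and integrable, via the pointwise bound $|W_n| \le \mathbb{E}[\Xi \mid \mathcal{J}_n]$ with $\Xi := \sup_{m}|Z_m|$.

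\textbf{Bellman recursion and stopped martingale.} The key structural identity I would establish is
\[ V_n = \mathbb{E}[W_{n+1} \mid \mathcal{J}_n], \qquad \text{hence} \qquad W_n = \max\bigl(Z_n,\ \mathbb{E}[W_{n+1}\mid \mathcal{J}_n]\bigr). \]
Proving this requires the standard fact that the family $\{\mathbb{E}[Z_\tau \mid \mathcal{J}_{n+1}] : \tau \in \hat{\mathbb{J}}(n)\}$ is upward directed, so that the essential supremum is an increasing limit and may be interchanged with $\mathbb{E}[\,\cdot \mid \mathcal{J}_n]$ by monotone convergence. From the recursion, $W$ is a supermartingale ($W_n \ge \mathbb{E}[W_{n+1}\mid\mathcal{J}_n]$), and $\tau^*$ rewrites as $\tau^* = \inf\{n > 0 : W_n = Z_n\}$, the first entrance into the stopping region. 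On the event $\{\tau^* > n\}$ (for $n \ge 1$) we have the strict inequality $W_n = V_n = \mathbb{E}[W_{n+1}\mid\mathcal{J}_n]$, which shows that the stopped process $(W_{n \wedge \tau^*})_{n \ge 1}$ is a genuine martingale.

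\textbf{Optional stopping and the limit.} Optional sampling for the bounded times $1$ and $N$ gives $\mathbb{E}[W_{N \wedge \tau^*} \mid \mathcal{J}_1] = W_{1 \wedge \tau^*} = W_1$, using $\tau^* \ge 1$. Letting $N \to \infty$, the a.s.\ finiteness of $\tau^*$ yields $W_{N\wedge\tau^*} \to W_{\tau^*}$ a.s., and the uniform bound $|W_{N\wedge\tau^*}| \le \mathbb{E}[\Xi \mid \mathcal{J}_{N\wedge\tau^*}]$ --- a uniformly integrable family because $\mathbb{E}[\Xi\mid\mathcal{J}_0]<\infty$ closes a Doob martingale --- justifies conditional dominated convergence, so $\mathbb{E}[W_{\tau^*}\mid\mathcal{J}_1] = W_1$. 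Taking $\mathbb{E}[\,\cdot \mid \mathcal{J}_0]$, using $V_0 = \mathbb{E}[W_1\mid\mathcal{J}_0]$ and $W_{\tau^*} = Z_{\tau^*}$ (by definition of $\tau^*$), gives $\mathbb{E}[Z_{\tau^*}\mid\mathcal{J}_0] = V_0$, the desired equality. The main obstacle is precisely this last limit: because $\tau^*$ is only a.s.\ finite rather than bounded, one cannot invoke optional stopping directly, and the hypothesis $\mathbb{E}[\sup_{\mathbb{N}}|Z_t|\mid\mathcal{J}_0]<\infty$ is exactly what supplies the uniform integrability that makes the truncation limit valid.
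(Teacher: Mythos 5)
Your proof is correct and is precisely the classical Snell-envelope argument that the paper itself invokes: the paper gives no proof of this lemma, deferring to Theorems 4.1--4.3 of Snell (1952) and the Optimal Optional Stopping Lemma of Derman and Sacks (1960), and your construction of the continuation value $V_n$, the envelope $W_n=\max(Z_n,V_n)$, the Bellman recursion obtained via upward-directedness of the family $\{\mathbb{E}[Z_\tau\mid\mathcal{J}_{n+1}]\}$, and the uniformly integrable passage from $N\wedge\tau^*$ to $\tau^*$ constitute a faithful, self-contained rendering of exactly that argument. The only points requiring care --- the exchange of $\esssup$ with conditional expectation and the truncation limit under the merely conditional integrability hypothesis $\mathbb{E}\left[\sup_t\lvert Z_t\rvert\mid\mathcal{J}_0\right]<\infty$ --- are the ones you correctly identify and resolve.
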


 {\bf Proof of Proposition} \ref{prop2}.
Recall that we need to show that  for any time $t_0 < \sigma^i$, there exists a $\tau \in  \hat{ \mathbb{F}}^i  (t_0)$ such that $\rho^i(t_0) = \rho^i(t_0, \tau) \mbox{ \emph{($\mathbb{P}^i$-a.e.)}}$.

We have that for all $\hat{\tau} \in \hat{\mathbb{F}}^i(t_0)$, $\rho^i(t_0, \hat{\tau}) \leq \rho^i(t_0) \mbox{  ($\mathbb{P}^i$-a.e.)}$. Taking $$\mathbb{P}^i( t_0 < \sigma^i \leq \hat{\tau} \big| \mathcal{F}^i(t_0)) = \mathbb{E}^i \left[ \mathbbm{1}_{ \{ t_0 < \sigma^i \leq \hat{\tau}\} } \big| \mathcal{F}^i(t_0) \right],$$ we have in parallel with Eq. \eqref{eq:prop21},
\begin{equation}\label{eqn:0-upper-bound}
\mathbb{E}^i \left[ X^i_{ \sigma^i \wedge \hat{\tau} } - X^i_{t_0} - \rho^i(t_0) \mathbbm{1}_{ \{ t_0 < \sigma^i \leq \hat{\tau}\} } \big| \mathcal{F}^i(t_0) \right] \leq 0 \mbox{  ($\mathbb{P}^i$-a.e.).}
\end{equation}

Defining
\begin{equation}
\epsilon = - \underset{\hat{\tau} \in  \hat{ \mathbb{F}}^i (t_0) }{ \esssup} ~ \mathbb{E}^i \left[ X^i_{ \sigma^i \wedge \hat{\tau} } - X^i_{t_0} - \rho^i(t_0) \mathbbm{1}_{ \{ t_0 < \sigma^i \leq \hat{\tau}\} } \big| \mathcal{F}^i(t_0) \right],
\end{equation}
we have that $\epsilon \geq 0 \mbox{  ($\mathbb{P}^i$-a.e.)}$. We may use $-\epsilon$ as an improved upper bound in \eqqref{eqn:0-upper-bound}. This may be rearranged to yield
\begin{equation}
\rho^i(t_0, \hat{\tau}) \leq \rho^i(t_0) - \frac{ \epsilon}{ \mathbb{E}^i \left[  \mathbbm{1}_{ \{ t_0 < \sigma^i \leq \hat{\tau}\} }  \big| \mathcal{F}^i(t_0) \right] } \leq \rho^i(t_0) - \epsilon \mbox{  ($\mathbb{P}^i$-a.e.)}.
\end{equation}

Since the above property holds for all such $\hat{\tau}$, it extends to the essential supremum, yielding
\begin{equation}
\rho^i(t_0) \leq \rho^i(t_0) - \epsilon \mbox{  ($\mathbb{P}^i$-a.e.)},
\end{equation}
or equivalently that $\epsilon \leq 0  \mbox{  ($\mathbb{P}^i$-a.e.)}$. In conjunction with the first observation, that $\epsilon \geq 0 \mbox{  ($\mathbb{P}^i$-a.e.)}$,  we have  $\epsilon = 0 \mbox{  ($\mathbb{P}^i$-a.e.)}$, i.e., 
\begin{equation}\label{eqn41}
\underset{\hat{\tau} \in  \hat{ \mathbb{F}}^i (t_0) }{ \esssup} ~ \mathbb{E}^i \left[ X^i_{ \sigma^i \wedge \hat{\tau} } - X^i_{t_0} - \rho^i(t_0) \mathbbm{1}_{ \{ t_0 < \sigma^i \leq \hat{\tau}\} } \big| \mathcal{F}^i(t_0) \right] = 0 \mbox{  ($\mathbb{P}^i$-a.e.)}.
\end{equation}

Define $Z^i_t = X^i_{ \sigma^i \wedge t } - X^i_{t_0} - \rho^i(t_0) \mathbbm{1}_{ \{ t_0 < \sigma^i \leq t\} }$. Note that the integrability condition of Lemma \ref{lem:optional} is satisfied due to Eq. \eqref{eqn:integrability}. For $t \geq \sigma^i$, $Z^i_t$ is constant, hence $\tau^* \leq \sigma^i < \infty$ almost surely. Hence we may apply Lemma \ref{lem:optional} here to yield a stopping time $\tau^* \in \hat{\mathbb{F}}^i(t_0)$ such that
 \begin{equation}
\mathbb{E}^i \left[ X^i_{ \sigma^i \wedge \tau^*} - X^i_{t_0} - \rho^i(t_0) \mathbbm{1}_{ \{ t_0 < \sigma^i \leq \tau^* \} } \big| \mathcal{F}^i(t_0) \right] = 0 \mbox{  ($\mathbb{P}^i$-a.e.)},
\end{equation}
or
\begin{equation}
\rho^i(t_0) = \frac{ \mathbb{E}^i \left[ X^i_{ \sigma^i \wedge \tau^*} - X^i_{t_0} \big| \mathcal{F}^i(t_0) \right] }{ \mathbb{P}^i \left( t_0 < \sigma^i \leq \tau^*  \big| \mathcal{F}^i(t_0) \right) } = \rho^i(t_0, \tau^*)\mbox{  ($\mathbb{P}^i$-a.e.)}.
\end{equation}

Hence, the solo-payout index $\rho^i(t_0)$ is realized ($\mathbb{P}^i$-a.e.) for some $\mathbb{F}^i$-stopping time $\tau^* > t_0$. \qed

 {\bf Proof of Proposition \ref{prop:gittins-decreasing}}.
 
For $k > 0$, let $\tau^i_k < \sigma^i$, and therefore $\tau^i_{k-1} < \sigma^i$. Defining  $$Z^i_t = X^i_{ \sigma^i \wedge t } - X^i_{\tau^i_{k-1}} - \rho^i(\tau^i_{k-1}) \mathbbm{1}_{ \{ \tau^i_{k-1} < \sigma^i \leq t\} },$$
note that for $t > \tau^i_k$: $Z^i_t - Z^i_{\tau^i_k} = X^i_{ \sigma^i \wedge t } - X^i_{\tau^i_k} - \rho^i(\tau^i_{k-1}) \mathbbm{1}_{ \{ \tau^i_{k} < \sigma^i \leq t\} }$.

It follows from the proof of Proposition \ref{prop2} that the solo-payout index from time $\tau^i_{k-1}$ is realized by a $\tau^i_k$ such that
\begin{equation}\label{eqn:34?}
\underset{ \tau' \in \hat{\mathbb{F}}^i (\tau^i_{k}) }{ \esssup } ~\mathbb{E}^i \left[ Z^i_{\tau'} \big| \mathcal{F}^i( \tau^i_{k}) \right] \leq  Z^i_{\tau^i_k} \mbox{  ($\mathbb{P}^i$-a.e.)},
\end{equation}
or
\begin{equation}
\underset{ \tau' \in \hat{\mathbb{F}}^i (\tau^i_{k}) }{ \esssup } ~\mathbb{E}^i \left[ X^i_{ \sigma^i \wedge \tau' } - X^i_{\tau^i_k} - \rho^i(\tau^i_{k-1}) \mathbbm{1}_{ \{ \tau^i_{k} < \sigma^i \leq \tau'\} } \big| \mathcal{F}^i( \tau^i_{k}) \right] \leq  0 \mbox{  ($\mathbb{P}^i$-a.e.)}.
\end{equation}
From the above, for any $\tau' \in \hat{\mathbb{F}}^i(\tau^i_k)$, we have
\begin{equation}
\frac{ \mathbb{E}^i \left[ X^i_{ \sigma^i \wedge \tau' } - X^i_{\tau^i_k} \big| \mathcal{F}^i( \tau^i_{k}) \right] }{ \mathbb{P}^i \left( \tau^i_{k} < \sigma^i \leq \tau' \big| \mathcal{F}^i( \tau^i_{k}) \right) } \leq \rho^i(\tau^i_{k-1})  \mbox{  ($\mathbb{P}^i$-a.e.)}.
\end{equation}
Taking the essential supremum over such $\tau'$ establishes that $\rho^i(\tau^i_{k}) \leq \rho^i(\tau^i_{k-1})$, $ \mbox{  ($\mathbb{P}^i$-a.e.)}$. \qed
%
 
%\ \\  

\medskip

{\bf Acknowledgements.}  
We acknowledge support for this work from the National Science Foundation, NSF grants: CMMI-1662629 and CMMI-1662442.

\end{document}